\documentclass{article}
\usepackage[preprint]{colm2026_conference}

\usepackage{amsmath}
\usepackage{amsthm}
\usepackage{float}
\usepackage{amssymb}

\newtheorem{proposition}{Proposition}

\newtheorem{definition}{Definition}

\usepackage{microtype}
\usepackage{hyperref}
\usepackage{url}
\usepackage{graphicx}
\usepackage{booktabs}
\usepackage{multirow}
\usepackage{xcolor}
\usepackage{colortbl}
\usepackage{subcaption}
\usepackage{bm}
\usepackage{enumitem}
\usepackage{tcolorbox}
\usepackage{lineno}
\usepackage{wrapfig}
\tcbuselibrary{skins,breakable}

\usepackage{caption}
\AtBeginDocument{%
  \abovedisplayskip      4pt plus 2pt minus 2pt
  \belowdisplayskip      4pt plus 2pt minus 2pt
  \abovedisplayshortskip 2pt plus 1pt
  \belowdisplayshortskip 4pt plus 2pt minus 2pt
}

\usepackage{etoolbox}
\AtBeginEnvironment{proof}{\setlength\topsep{2pt}\setlength\partopsep{0pt}}
\AtBeginEnvironment{proposition}{\setlength\topsep{2pt}\setlength\partopsep{0pt}}
\AtBeginEnvironment{corollary}{\setlength\topsep{2pt}\setlength\partopsep{0pt}}
\AtBeginEnvironment{definition}{\setlength\topsep{2pt}\setlength\partopsep{0pt}}

\definecolor{darkblue}{rgb}{0,0,0.5}
\hypersetup{colorlinks=true,citecolor=darkblue,linkcolor=darkblue,urlcolor=darkblue}

\newcommand{\clinic}[1]{\textcolor{teal!80!black}{#1}}

\newcommand{\tabdom}[1]{\textcolor{orange!85!black}{#1}}
\newcommand{\chartdom}[1]{\textcolor{violet}{#1}}

\usepackage[textsize=scriptsize]{todonotes}
\usepackage{inconsolata}

\title{When Verification Fails: How Compositionally Infeasible Claims Escape Rejection}

\author{
Muxin Liu$^{*}$ \quad Delip Rao \quad Grace Kim \quad Chris Callison-Burch \\
University of Pennsylvania \\
\texttt{muxinliu@seas.upenn.edu}
}

\begin{document}
\ifcolmsubmission
\linenumbers
\fi
\maketitle

\begin{abstract}
Scientific claim verification, the task of determining whether claims are entailed by scientific evidence, is fundamental to establishing discoveries in evidence while preventing misinformation. This process involves evaluating each asserted constraint against validated evidence. Under the Closed-World Assumption (CWA), a claim is accepted if and only if all asserted constraints are positively supported. We show that existing verification benchmarks cannot distinguish models enforcing this standard from models applying a simpler shortcut called salient-constraint checking, which applies CWA's rejection criterion only to the most salient constraint and accepts when that constraint is supported. Because existing benchmarks construct infeasible claims by perturbing a single salient element they are insufficient at distinguishing between rigorous claim verification and simple salient-constraint reliance.
To separate the two, we construct compositionally infeasible claims where the salient constraint is supported but a non-salient constraint is contradicted. Across model families and modalities, models 
that otherwise saturate existing benchmarks consistently over-accept these claims, confirming the prevalence of such shortcut reasoning. Via model context interventions, we show that different models and prompting strategies occupy distinct positions on a shared ROC curve, indicating that the gap between model families reflects differences in verification threshold rather than underlying reasoning ability, and that the compositional inference bottleneck is a structural property of current verification behavior that strategy guidance alone cannot overcome.
\end{abstract}

\section{Introduction}

Claim verification, the task of determining whether claims are entailed by evidence, is fundamental to grounding discoveries in evidence while preventing misinformation. Claim verification has been widely studied as a natural language inference (NLI) task, with large-scale benchmarks spanning general knowledge verification~\citep{thorne2018fever,aly2021fact}, scientific literature~\citep{wadden2020fact,wadden2022scifact}, clinical trials~\citep{jullien2023nli4ct,vladika2024healthfc}, tables~\citep{chen2019tabfact, lu2023scitab}, and charts~\citep{akhtar2023reading, akhtar2024chartcheck,masry2025chartqapro,wang2025sciver}. This judgment requires a precise epistemic standard: a claim should be accepted if and only if the evidence provides positive support for it, and absence of support warrants rejection. This standard is formalized as the Closed-World Assumption (CWA) ~\citep{clark1977negation, reiter1981closed}. Operationally, this requires checking every constraint the claim asserts~\citep{min2023factscore}, including those derived compositionally from multiple evidence units ~\citep{si2021topic,zhang2024causal, chen2024complex}.

We evaluate Large Language Models (LLMs) across three established benchmarks, NLI4CT~\citep{jullien2023nli4ct,jullien2024semeval} for clinical trial text, SCITAB~\citep{lu2023scitab} for scientific tables, and SciVer~\citep{wang2025sciver} for charts, and we find that models consistently achieve high accuracy on infeasible claims but lower accuracy on feasible ones, as shown in Figure~\ref{fig:ablation1}. This asymmetry could be interpreted as evidence of conservative, closed-world verification, or could reflect a weaker shortcut observed in the literature for other tasks as well ~\citep{gururangan2018annotation, mccoy2019right, liu2023evaluating}. 

The construction of existing benchmarks makes full CWA and a simpler shortcut procedure behaviorally indistinguishable. Negative examples in these benchmarks (also called standard negatives) are created by perturbing a single salient element, such as flipping a number, reversing a trend, or swapping an entity label. This makes the perturbed constraint both the most checkable part of the claim and sufficient for rejection. As a result, as shown in Figure~\ref{fig:overview}, a shortcut is sufficient when verifying \textbf{Existing Benchmark Infeasible Claim}: identify the most salient constraint in the claim, check whether it is supported by the evidence, and decide on that basis alone. This salient-constraint checking procedure is an easier, less formal approximation of CWA, as it applies CWA's rejection criterion but only to the most prominent constraint, skipping the exhaustive check that full CWA requires. It also explains the observed asymmetry. For infeasible claims, the salient constraint is violated, so rejection follows directly. For feasible claims, acceptance requires the salient constraint to be clearly retrievable, and any uncertainty defaults to rejection instead. High negative accuracy together with lower positive accuracy is therefore consistent with both full CWA and salient-constraint checking.

\begin{figure}[t]
  \centering
  \includegraphics[width=\linewidth]{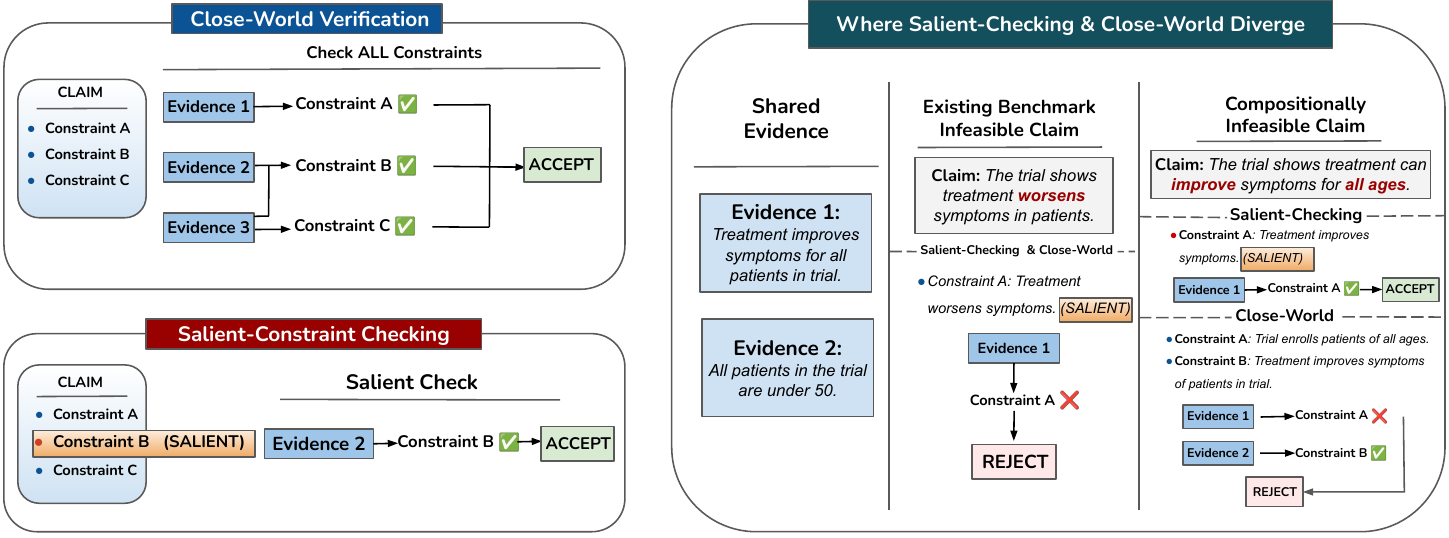}
  \caption{CWA verification checks all constraints; salient-constraint 
  checking only finds the most prominent constraint. On existing benchmarks both agree since violated constraints are always salient. On compositionally infeasible claims, salient-constraint checking accepts while CWA rejects.}
  \label{fig:overview}
\end{figure}
  
To expose the gap between salient-constraint checking and full CWA, we construct claims for which the shortcut yields acceptance but CWA requires rejection, as shown in Figure~\ref{fig:overview}, under the \textbf{Compositionally Infeasible Claim} setting. These claims, which we call compositionally infeasible claims (also called adversarial negatives), are designed so that the salient constraint appears supported by the evidence, while the violation lies in a non-salient constraint such as a composed relationship across multiple parts of the evidence. We create hard negatives using a graph-based corrupt-and-propagate procedure, which preserves the salient observations while introducing violations through composing multiple parts of the same evidence source such as two sentences in a clinical report, two trends in a chart, or two columns in a table. Salient-constraint checking verifies a single constraint and accepts, while full CWA checks all constraints including non-salient ones and rejects.

\begin{wrapfigure}{r}{0.5\columnwidth}
  \centering
  \includegraphics[width=0.5\columnwidth]{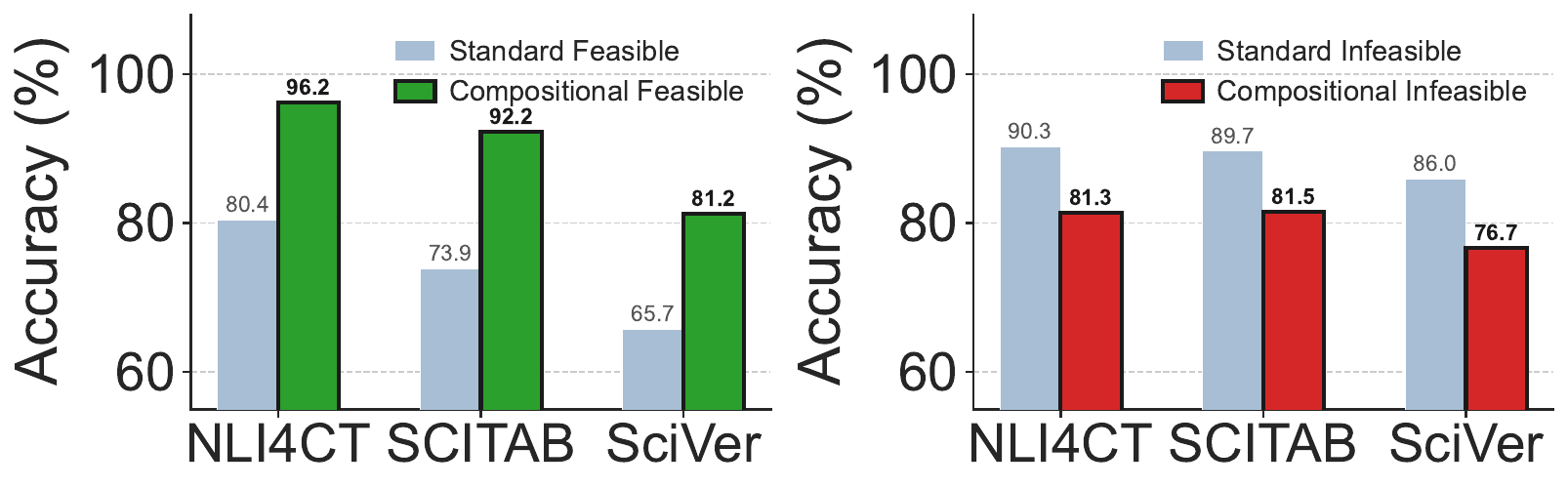}
  \caption{Averaged accuracy across models. 
  Compositionally feasible claims score higher ($\uparrow$) than standard 
  positives; compositionally infeasible claims score lower ($\downarrow$). 
  Compositional structure shifts behavior toward acceptance.}
  \label{fig:ablation1}
    \vspace{-15pt}  
\end{wrapfigure}

Models that perform strongly on existing benchmarks fail consistently 
on adversarial examples, as summarized in Figure~\ref{fig:ablation1}: they accept adversarial negatives at 
higher rates than standard negatives, and this gap persists across 
model families and scales. Models accept claims when the salient 
constraint appears supported, which produces systematic over-acceptance 
on adversarial negatives.

We further study this failure with test-time guidance toward CWA, including closed-world instructions and explicit decomposition. These interventions shift the tradeoff between acceptance and rejection rather than improving performance on adversarial negatives overall. This movement along the ROC curve suggests that different guidance strategies change how the model applies verification, but do not resolve the underlying reasoning bottleneck. To analyze reasoning more directly, we also introduce a rubric-based LLM judge that scores how well a model's reasoning trace follows CWA, including whether it checks all constraints the claim asserts or terminates early at the salient one. These scores reveal systematic differences across models, examples, and interventions, and provide a trace-level analysis of verification reasoning beyond outcome-based evaluation.

We make three contributions. First, we identify a structural limitation 
of existing verification benchmarks: single-constraint construction makes 
salient-constraint checking and full CWA behaviorally indistinguishable. Second, we construct compositionally infeasible claims across three 
modalities that expose this gap, and show consistent over-acceptance across 
model families and scales. Third, we show this failure reflects a reasoning 
bottleneck rather than a strategy choice through both intervention and trace analysis.

\section{Theoretical Framework}
We formalize the verification standards for scientific claim 
verification, introduce salient-constraint checking as a weaker procedure 
that existing benchmarks cannot distinguish from full CWA, and define 
compositional falsification as the condition that separates the two
\subsection{Closed-World and Open-World Assumptions}

Let $E = \{e_1, \ldots, e_m\}$ be a set of evidence pieces and $c$ a claim
asserting constraints $\mathcal{C}(c) = \{c_1, \ldots, c_n\}$. Under
\textbf{CWA}, $c$ is accepted iff every
constraint is supported:
\begin{equation}
\text{CWA}(c, E) = \begin{cases}
\text{Accept} & \text{if } \forall c_i \in \mathcal{C}(c),\
               \exists S_i \subseteq E\ \text{s.t.}\
               \bigwedge_{e \in S_i} e \models c_i \\
\text{Reject} & \text{otherwise}
\end{cases}
\end{equation}
CWA is inherently \textbf{$\forall$-type}: every constraint must be checked.
Under the \textbf{Open-World Assumption (OWA)}, a claim is rejected iff explicitly contradicted:
\begin{equation}
\text{OWA}(c, E) = \begin{cases}
\text{Reject} & \text{if } \exists S \subseteq E\ \text{s.t.}\
               \bigwedge_{e \in S} e \models \neg c \\
\text{Accept} & \text{otherwise}
\end{cases}
\end{equation}
OWA is inherently \textbf{$\exists$-type}: it searches for any contradiction
and accepts by default when none is found.

\subsection{Salient-Constraint Checking}
\label{sec:scc}

\begin{definition}[Salience]
The $\text{salience}(c_i, E)$ of a constraint 
$c_i \in \mathcal{C}(c)$ measures how directly $E$ speaks 
to $c_i$ without requiring composition across multiple 
evidence units. The most salient constraint is 
$c^* = \arg\max_{c_i \in \mathcal{C}(c)} \text{salience}(c_i, E)$, 
corresponding operationally to the primary subject-predicate 
assertion verifiable from a single contiguous evidence unit.
\end{definition}

\textbf{Salient-constraint checking} applies the CWA verdict to $c^*$ alone:
\begin{equation}
\text{V}_s(c, E) = \begin{cases}
\text{Accept} & \text{if } \exists e \in E\ \text{s.t.}\ e \models c^* \\
\text{Reject} & \text{otherwise}
\end{cases}
\end{equation}
$\text{V}_s$ is a hybrid: it inherits CWA's rejection criterion but 
reduces the $\forall$ over all constraints to a $\exists$ over one 
salient constraint, making it operationally similar to OWA. Let $c^+, c^-$ 
denote feasible/infeasible claim. Existing benchmarks construct $c^-$ by 
perturbing a single salient element, ensuring $\exists!\, c^* \in 
\mathcal{C}(c^-)$ s.t.\ $e \models \neg c^*$ for some $e \in E$, with 
$\forall c_i \neq c^*$: $\exists e \in E$ s.t.\ $e \models c_i$. We call it \textit{single-constraint regime}.

\begin{proposition}
In the single-constraint regime, $\text{CWA}$ and $\text{V}_s$ produce
identical verdicts on both $c^-$ and $c^+$, assuming all constraints of
$c^+$ are supported.
\end{proposition}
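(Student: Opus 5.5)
The plan is a direct case analysis on the label of the claim, comparing the two decision rules constraint by constraint. Before starting I would make explicit two background assumptions that the framework uses implicitly. The first is that $E$ is consistent: no subset $S \subseteq E$ entails both $c_i$ and $\neg c_i$. The second is that the salient constraint $c^*$ is, by the Definition of salience, "verifiable from a single contiguous evidence unit." This means that whenever $c^*$ is supported by some $S \subseteq E$, it is already supported by a single $e \in E$. That bridges the gap between CWA's $\exists S_i \subseteq E$ and $\text{V}_s$'s $\exists e \in E$.

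\textbf{Case $c^+$.} By hypothesis every $c_i \in \mathcal{C}(c^+)$ has a supporting set $S_i$, so $\text{CWA}(c^+,E)=\text{Accept}$ directly from the first branch of its definition. For $\text{V}_s$, I would apply this to $c_i = c^*$ to get a supporting $S^*$. The single-unit reading of salience then collapses $S^*$ to a single $e$ with $e \models c^*$, so $\text{V}_s(c^+,E)=\text{Accept}$.

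\textbf{Case $c^-$.} The single-constraint regime gives a unique $c^*$ with some $e_0 \in E$ satisfying $e_0 \models \neg c^*$, while all other constraints are supported. For CWA, I would argue that $c^*$ has no supporting set: if some $S$ had $\bigwedge_{e\in S} e \models c^*$, then $S \cup \{e_0\}$ would entail both $c^*$ and $\neg c^*$, contradicting consistency. Hence the $\forall$ condition fails and CWA rejects. For $\text{V}_s$, the same consistency argument with $S=\{e\}$ shows no $e \in E$ entails $c^*$, so $\text{V}_s$ rejects. The two verdicts therefore coincide on both $c^-$ and $c^+$.

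The main obstacle is not any computation but pinning down that the perturbed constraint in the regime is the same $c^*$ that $\text{V}_s$ inspects, namely the argmax of salience. The regime is phrased using "$c^*$," and the benchmark construction perturbs "a single salient element." I would make this identification explicit as part of what the single-constraint regime means, noting that perturbing the most directly checkable element keeps it the argmax. I would also remark that the proof uses only the $c^*$ component. The remaining constraints being supported is what makes CWA's verdict depend on $c^*$ alone, which is exactly why the two procedures cannot be told apart here. This is also why a compositional violation in some $c_i \neq c^*$ separates them.
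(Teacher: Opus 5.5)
Your proof is correct and follows the same case split as the paper's own two-line proof: for $c^-$, $c^*$ is contradicted, so both procedures reject; for $c^+$, every constraint, hence $c^*$, is supported, so both accept. Your additions are the steps the paper leaves implicit: consistency of $E$, so that being contradicted rules out being supported; the single-evidence-unit reading of salience, which matches CWA's $\exists S_i \subseteq E$ to $\text{V}_s$'s $\exists e \in E$; and identifying the perturbed constraint with the salience argmax $c^*$.
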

\begin{proof}
For $c^-$: both check $c^*$, find it contradicted, and reject. For $c^+$:
CWA finds all constraints supported; $\text{V}_s$ finds $c^*$ supported.
Both accept.
\end{proof}
Thus, existing benchmark performance cannot distinguish $\text{CWA}$ from $\text{V}_s$.

\subsection{Compositional Falsification}

A \textbf{compositionally infeasible claim} $c^\dagger$ satisfies:
\begin{equation}
\exists e \in E\ \text{s.t.}\ e \models c^*; \qquad
\exists\, c_j \neq c^*,\ \exists S_2 \subseteq E\ \text{s.t.}\
\bigwedge_{e \in S_2} e \models \neg c_j
\end{equation}
The violation may be direct ($|S_2|=1$) or compositional ($|S_2|\geq 2$).

\begin{proposition}
On compositionally infeasible claims, $\text{V}_s$ accepts and
$\text{CWA}$ rejects.
\end{proposition}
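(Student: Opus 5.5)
The plan is to unfold the two definitions and check each verdict directly against the two conditions defining a compositionally infeasible claim $c^\dagger$. The argument splits into two independent halves, one per verification procedure, and neither needs any earlier result beyond the definitions of $\text{V}_s$ and $\text{CWA}$.

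\textbf{Step 1 ($\text{V}_s$ accepts).} By definition, $\text{V}_s$ consults only the most salient constraint $c^*$. It accepts exactly when some single $e \in E$ satisfies $e \models c^*$. The first conjunct of the defining condition for $c^\dagger$ supplies precisely such an $e$, so $\text{V}_s(c^\dagger, E) = \text{Accept}$. The violated constraint $c_j$ plays no role here, because $c_j \neq c^*$ and $\text{V}_s$ never inspects constraints other than $c^*$.

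\textbf{Step 2 ($\text{CWA}$ rejects).} $\text{CWA}$ accepts only if every $c_i \in \mathcal{C}(c^\dagger)$ has a supporting subset $S_i$ with $\bigwedge_{e\in S_i} e \models c_i$. I would take the constraint $c_j$ and the subset $S_2$ from the second conjunct, for which $\bigwedge_{e\in S_2} e \models \neg c_j$, and argue that $c_j$ has no supporting subset. Suppose some $S_j$ did entail $c_j$. Then $S_j \cup S_2$ entails both $c_j$ and $\neg c_j$, so $E$ is inconsistent. Under the standing assumption that the evidence set $E$ is consistent, no such $S_j$ exists. The universal condition therefore fails at $c_j$, and $\text{CWA}$ falls into its ``otherwise'' branch and returns Reject. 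This holds whether $|S_2| = 1$ or $|S_2| \geq 2$, since $\text{CWA}$ quantifies over arbitrary subsets $S_i \subseteq E$ and so already accommodates compositional entailment.

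\textbf{Main obstacle.} The only non-definitional step is Step 2. The definition of $\text{CWA}$ speaks of support, while the definition of $c^\dagger$ speaks of contradiction, and passing from ``$c_j$ is contradicted'' to ``$c_j$ is not supported'' requires $E$ to be consistent. I would state this consistency assumption explicitly; it is implicit in treating $E$ as validated evidence. A second, smaller point is that $c_j$ must actually be one of the asserted constraints in $\mathcal{C}(c^\dagger)$, which the definition grants by quantifying $c_j$ over the claim's constraints.
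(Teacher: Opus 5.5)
Your proof is correct and follows the paper's own argument: $\text{V}_s$ accepts because the first condition gives a single $e \models c^*$, and CWA rejects because $c_j$ is contradicted. Your explicit assumption that $E$ is consistent, used to pass from ``$c_j$ is contradicted'' to ``$c_j$ has no supporting subset,'' fills in a step the paper's one-line proof leaves implicit.
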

\begin{proof}
$\text{V}_s$ finds $c^*$ supported and accepts; CWA finds $c_j$ contradicted and rejects.
\end{proof}

\begin{proposition}
Degradation on compositionally infeasible claims relative to
single-constraint infeasible claims is diagnostic for $\text{V}_s$ as the
default. Absence of degradation supports $\text{CWA}$.
\end{proposition}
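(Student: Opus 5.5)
The plan is to read Proposition 3 as a statement about a model whose default procedure is either $\text{CWA}$ or $\text{V}_s$, and to compare its accuracy on two populations of negatives: single-constraint infeasible claims $c^-$ and compositionally infeasible claims $c^\dagger$. I would model the verifier as a mixture, applying $\text{V}_s$ with probability $p$ and $\text{CWA}$ with probability $1-p$. Both procedures are then executed with some bounded retrieval error: a supported or contradicted constraint is correctly detected with probability at least $1-\epsilon$. The claim then becomes a statement about how the rejection rates on $c^-$ and $c^\dagger$ depend on $p$.

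The first step computes the rejection rate on $c^-$. By Proposition 1, $\text{CWA}$ and $\text{V}_s$ agree and both reject. The rejection rate is therefore $r^- \approx 1-\epsilon$ independently of $p$. This is the formal sense in which standard negatives carry no information about $p$.

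The second step computes the rejection rate on $c^\dagger$. By Proposition 2, $\text{V}_s$ accepts and $\text{CWA}$ rejects. The rejection rate is therefore $r^\dagger \approx (1-p)(1-\epsilon')$, where $\epsilon'$ is the error in detecting the violated constraint $c_j$. Here $\epsilon' \ge \epsilon$, since $c_j$ may require composing evidence with $|S_2|\ge 2$. The degradation is then
$$\Delta = r^- - r^\dagger \approx p(1-\epsilon') + (\epsilon'-\epsilon).$$
Under idealized detection ($\epsilon=\epsilon'=0$) this reduces to $\Delta = p$. A large $\Delta$ therefore indicates that $\text{V}_s$ is the default, and $\Delta \approx 0$ forces $p\approx 0$, which is what it means for $\text{CWA}$ to be the default. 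That gives both directions of the proposition.

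The main obstacle is the confound term $\epsilon'-\epsilon$. A $\text{CWA}$ verifier can also degrade on $c^\dagger$ simply because compositional violations are harder to detect, not because it skips constraints. Degradation alone is thus only diagnostic for $\text{V}_s$ once this term is controlled. I would first state the proposition under the idealization that every constraint, compositional or not, is detectable when checked. Then I would add the qualification that the two explanations are separated empirically by comparison against compositionally feasible claims. Under $\text{V}_s$ these are accepted at a rate at least as high as standard positives. A $\text{CWA}$ verifier with high $\epsilon'$ would instead show depressed acceptance there, because it fails to confirm the compositional constraint. This matches the pattern in Figure~\ref{fig:ablation1}. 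The converse direction, that absence of degradation supports $\text{CWA}$, needs no such control: $\Delta\approx 0$ together with $\epsilon'\ge\epsilon$ already forces $p\approx 0$.
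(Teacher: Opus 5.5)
Your core argument is the paper's. The paper proves Proposition 3 only by noting that it follows from Propositions 1--2. Your idealized computation ($\epsilon=\epsilon'=0$, so $\Delta=p$) is that remark made quantitative: Proposition 1 makes $r^-$ independent of $p$, and Proposition 2 makes $r^\dagger$ track $1-p$. One small inconsistency is harmless: if you keep first-order error terms consistently, the $\text{V}_s$ branch also rejects $c^\dagger$ with probability about $\epsilon$, giving $\Delta\approx p(1-\epsilon-\epsilon')+(\epsilon'-\epsilon)$.

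The qualification you add to control the confound does not work as stated.

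\begin{itemize}
\item Under the $\text{CWA}$ rule, acceptance requires positively establishing every constraint. A $\text{CWA}$ verifier that merely fails to see the violation of $c_j$ therefore still rejects $c^\dagger$, because $c_j$ is left unconfirmed.
\item The only $\text{CWA}$ error that yields acceptance on $c^\dagger$ is a false \emph{confirmation} of $c_j$. So your $\epsilon'$ in $r^\dagger$ has to be a false-support rate.
\item Your proposed control, depressed acceptance of compositionally feasible claims, probes the opposite error: failing to confirm a true compositional constraint. By the same rule, that error cannot produce degradation on $c^\dagger$ at all.
\item A $\text{CWA}$ verifier whose compositional errors are false confirmations predicts no drop (if anything, a rise) in acceptance of compositionally feasible claims, together with degradation on $c^\dagger$. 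That is exactly the signature you attribute to $\text{V}_s$, so the pattern the paper reports for compositionally feasible claims does not discriminate the two.
\end{itemize}

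You have two defensible options. You can rest on the idealization, as the paper does. Or you can observe that the only confound surviving the $\text{CWA}$ rule is affirming a compositional constraint without establishing it, and that this is indistinguishable from $\text{V}_s$ at the level of verdicts. Separating mis-composition from skipping would then require the graph-conditioning experiment or the trace rubric, not the feasible-claim comparison.

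Your claim that the converse ``needs no such control'' also holds only inside your two-procedure mixture. A verifier that rejects broadly, irrespective of constraint structure, also has $\Delta\approx 0$. The paper raises exactly this caveat for Claude-Sonnet-4.5, citing its low positive accuracy. So ``absence of degradation supports $\text{CWA}$'' does need a check on positive accuracy.
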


Proposition~3 follows from Propositions~1--2. We test this in
\S\ref{sec:exp_main} and find consistent degradation across model families
and scales, confirming $\text{V}_s$ as the default verification procedure.
\section{Data Generation Method}
A limitation of existing benchmarks is their reliance on surface-level perturbations (e.g., numerical flipping, trend reversal) to construct infeasible claims. Such single-constraint violation procedures can be detectable from merely a single sentence or a table cell. These correspond to the single-constraint regime where salient-constraint checking and CWA are indistinguishable (\S\ref{sec:scc}). Our method instead constructs compositionally infeasible claims where the salient constraint remains supported and the violation lies in a non-salient constraint, detectable only by combining multiple parts of the same evidence source.
\begin{figure}[H]
  \centering
  \includegraphics[width=\columnwidth]{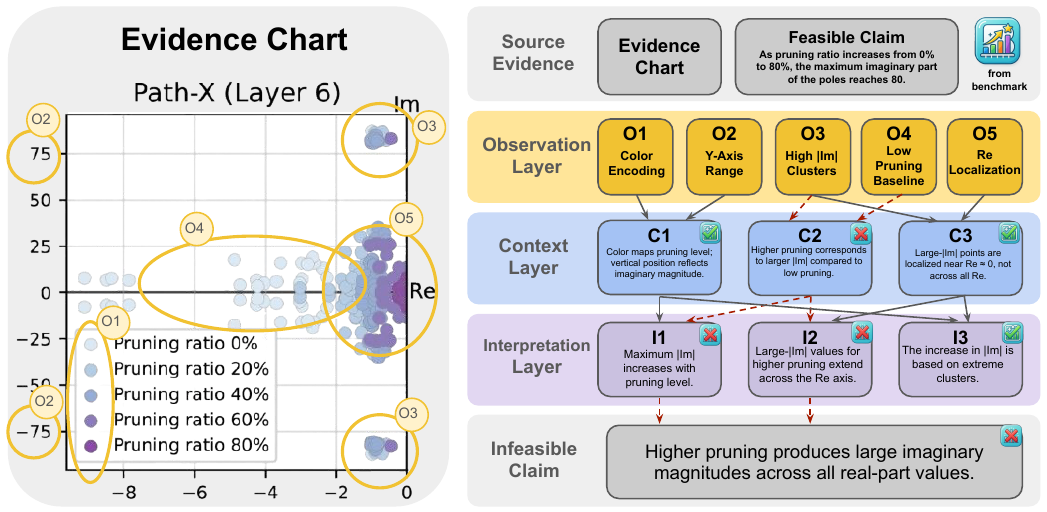}
  \caption{Graph-based adversarial construction. Observation nodes ground 
directly retrievable facts; context and interpretation nodes combine them 
into higher-level conclusions. Corrupting one non-salient node produces 
a compositionally infeasible claim while preserving all observation-layer 
support.}
  \label{fig:iceberg}
\end{figure}
\subsection{Graph-Based Adversarial Construction}
Starting from a feasible claim and its evidence source from an existing human-verified benchmark, we build a three-layer reasoning graph (Figure~\ref{fig:iceberg}). The observation layer consists of five nodes, each grounding a directly retrievable fact from the evidence without requiring inference. The context layer consists of three nodes, each combining at least two observation nodes into an intermediate relationship. The interpretation layer consists of three nodes, each combining at least two context nodes into a higher-level conclusion.

To produce a compositionally infeasible claim, we corrupt a single context or interpretation node using operations such as scope swapping or overgeneralization. The corruption is propagated through the graph to produce an infeasible claim that preserves all observation-layer support, keeping the salient constraint intact, while introducing a violation in a non-salient constraint. By preserving all observation-layer nodes, the construction ensures retrieval difficulty is matched to standard infeasible claims.

\subsection{Datasets}
\label{sec:datasets}


We construct adversarial datasets from three existing scientific claim verification benchmarks: NLI4CT, SCITAB, and SciVer. For each dataset we sample 200 standard positive and negative claims from the original benchmark, then generate a pool of adversarial negatives using our graph-based construction method implemented with GPT-5-mini. After human validation to remove ambiguous and accidentally feasible claims, we retain approximately 200 adversarial negatives per domain. Full dataset construction details are in Appendix~\ref{app:datasets}, human annotation details in Appendix~\ref{app:annotate}, prompts in Appendix ~\ref{app:gen prompt} and generation examples in Appendix ~\ref{app:gen example}.

\section{Experiments}
\label{sec:eval}

\begin{table}[t]
  \centering
  \resizebox{\linewidth}{!}{%
  \begin{tabular}{l | ccc c !{\color{gray}\vrule} r | ccc c !{\color{gray}\vrule} r | ccc c !{\color{gray}\vrule} r}
    \toprule
    & \multicolumn{5}{c|}{\textbf{NLI4CT}}
    & \multicolumn{5}{c|}{\textbf{SCITAB}}
    & \multicolumn{5}{c}{\textbf{SciVer}} \\
    \cmidrule(lr){2-6}\cmidrule(lr){7-11}\cmidrule(lr){12-16}
    \multirow{2}{*}{\textbf{Model}}
      & \multicolumn{3}{c}{\textbf{Standard}} & \textbf{Adv} & \multirow{2}{*}{$\bm{\Delta}$\textbf{(\%)}}
      & \multicolumn{3}{c}{\textbf{Standard}} & \textbf{Adv} & \multirow{2}{*}{$\bm{\Delta}$\textbf{(\%)}}
      & \multicolumn{3}{c}{\textbf{Standard}} & \textbf{Adv} & \multirow{2}{*}{$\bm{\Delta}$\textbf{(\%)}} \\
    \cmidrule(lr){2-4}\cmidrule(lr){7-9}\cmidrule(lr){12-14}
      & \textbf{Pos.(\%)} & \textbf{Neg.(\%)} & $\bm{F_1}$
      & \textbf{Neg.(\%)} &
      & \textbf{Pos.(\%)} & \textbf{Neg.(\%)} & $\bm{F_1}$
      & \textbf{Neg.(\%)} &
      & \textbf{Pos.(\%)} & \textbf{Neg.(\%)} & $\bm{F_1}$
      & \textbf{Neg.(\%)} & \\
    \midrule
    GPT-5.4          & 80.2 & 94.1 & 0.86 & 88.7 & $\downarrow$5.4
                     & 75.5 & 93.0 & 0.83 & 91.0 & $\downarrow$2.0
                     & 55.9 & 92.5 & 0.68 & 87.7 & $\downarrow$4.8 \\
    GPT-5-mini        & 83.7 & 91.1 & 0.87 & 84.7 & $\downarrow$6.4
                     & 79.0 & 92.0 & 0.84 & 82.5 & $\downarrow$9.5
                     & 78.4 & 84.5 & 0.81 & 73.4 & \textbf{$\downarrow$11.2} \\
    GPT-o4-mini       & 86.1 & 91.1 & 0.88 & 76.8 & \textbf{$\downarrow$14.2}
                     & 84.0 & 88.0 & 0.86 & 77.0 & \textbf{$\downarrow$11.0}
                     & 78.9 & 87.8 & 0.83 & 63.4 & \textbf{$\downarrow$24.4} \\
    GPT-4o-mini      & 71.3 & 85.1 & 0.77 & 73.4 & \textbf{$\downarrow$11.7}
                     & 58.5 & 81.0 & 0.66 & 67.0 & \textbf{$\downarrow$14.0}
                     & 66.7 & 59.2 & 0.64 & 50.7 & $\downarrow$8.4 \\
    \midrule
    Gemini-3.0-Flash & 86.1 & 86.6 & 0.86 & 73.9 & \textbf{$\downarrow$12.7}
                     & 87.0 & 89.0 & 0.88 & 74.0 & \textbf{$\downarrow$15.0}
                     & 75.6 & 87.3 & 0.80 & 60.6 & \textbf{$\downarrow$26.7} \\
    Gemini-2.5-Flash & 84.2 & 91.6 & 0.87 & 82.3 & $\downarrow$9.3
                     & 66.5 & 91.5 & 0.76 & 83.5 & $\downarrow$8.0
                     & 65.3 & 90.6 & 0.75 & 76.1 & \textbf{$\downarrow$14.5} \\
    Claude-Sonnet-4.5 & 75.7 & 91.6 & 0.82 & 86.2 & $\downarrow$5.4
                     & 75.0 & 92.5 & 0.82 & 89.5 & $\downarrow$3.0
                     & 52.6 & 94.8 & 0.67 & 83.1 & \textbf{$\downarrow$11.7} \\
    Claude-Haiku-4.5 & 76.2 & 91.1 & 0.82 & 84.7 & $\downarrow$6.4
                     & 65.5 & 90.5 & 0.75 & 87.5 & $\downarrow$3.0
                     & 52.6 & 91.1 & 0.65 & 85.9 & $\downarrow$5.1 \\
    \bottomrule
  \end{tabular}}
  \caption{Evaluation across three domains.
    Pos./Neg.\ = accuracy (\%) on standard positive/negative claims;
    $F_1$ = macro-$F_1$ on standard negatives;
    Adv Neg.\ = accuracy (\%) on adversarial negatives;
    $\Delta$ = Adv Neg.\ $-$ Standard Neg.\ (\%).
    $\Delta$ values shown as $\downarrow$; \textbf{bold} entries indicate substantial drop ($|\Delta|>10\%$).}
  \label{tab:routing}
\end{table}

We present three experiments that together characterize salient-constraint 
checking as the default verification procedure. \S\ref{sec:exp_main} 
establishes the performance gap under compositional falsification across 
model families and scales. \S\ref{sec:exp_graph} isolates the failure at 
the reasoning layer. \S\ref{sec:exp_prompting} tests whether the failure 
reflects a reasoning bottleneck or a strategy choice.

\subsection{Performance Gap under Compositional Falsification}
\label{sec:exp_main}

We evaluate eight proprietary models (GPT-5.4, GPT-5-mini, GPT-o4-mini, 
GPT-4o-mini, Gemini-3.0-Flash, Gemini-2.5-Flash, Claude-Sonnet-4.5, 
Claude-Haiku-4.5) and the Qwen3 family on NLI4CT, SCITAB, and SciVer, 
using standard and compositionally infeasible claims from 
\S\ref{sec:datasets}. Proprietary models are evaluated at temperature~1; Qwen3 uses thinking mode at temperature~0.6. 
All evaluations use 200 examples per domain; observed gaps exceed sampling uncertainty ($\pm$3\% at $n{=}200$).

Table~\ref{tab:routing} shows a consistent performance gap across all
models and domains. Every model shows lower accuracy on compositionally
infeasible claims than on standard infeasible claims on NLI4CT ($\Delta$
from $-3.0$ to $-12.7$), with larger drops on SCITAB and SciVer:
Gemini-3.0-Flash falls 15.0 and 26.7 points respectively. This pattern
is not explained by original-benchmark $F_1$: Gemini-3.0-Flash achieves
$F_1 = 0.88$ yet shows among the largest drops, while Claude-Sonnet-4.5
has lower $F_1 = 0.82$ with consistently smaller drops. The Claude
family's robustness is better attributed to safety-induced over-refusal
than stronger compositional verification~\citep{rottger2024xstest, huang2025safety}: their positive accuracy is among
the lowest in the table, consistent with broad rejection across claim types
rather than selective compositional checking.

\begin{figure}[t]
  \centering
  \includegraphics[width=\linewidth]{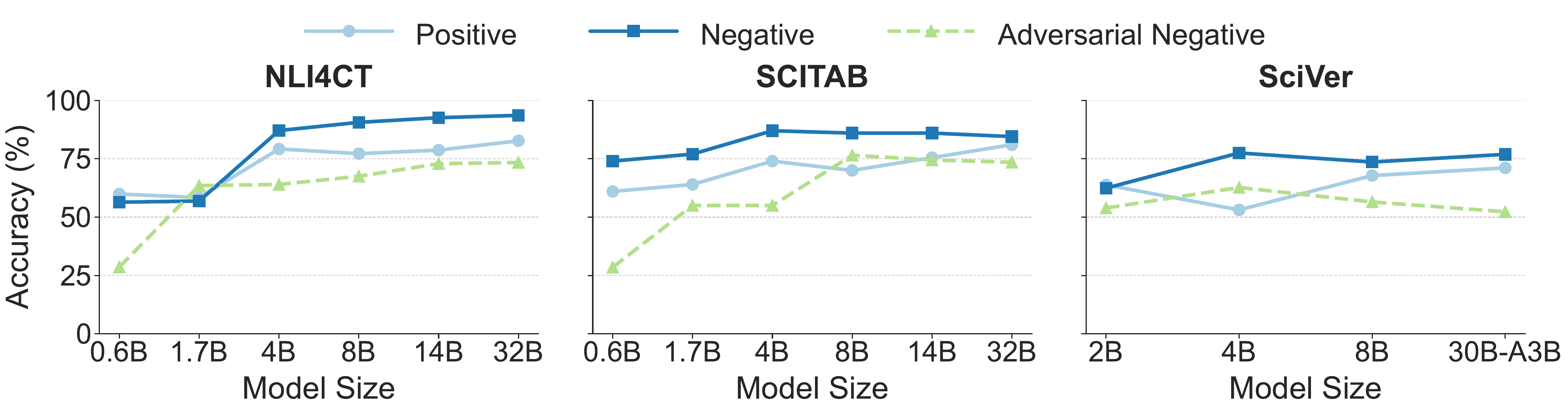}
  \caption{Scaling behavior across Qwen3 (VL) model sizes for three domains. Curves show accuracy for positive, negative, and adversarial negative claims.}
  \label{fig:scaling}
\end{figure}

Figure~\ref{fig:scaling} shows the gap persists with scale. At small scales, standard negative accuracy rises steeply while adversarial negative accuracy lags far behind, consistent with CWA-style rejection without reasoning. As scale increases, adversarial negative accuracy improves, but the gap never closes: at 32B, NLI4CT shows a 20-point gap (93.6\% vs.\ 73.4\%), SCITAB a narrower but persistent gap, and SciVer limited gains throughout. The trajectory confirms that scaling sharpens retrieval and direct-falsification rejection but does not resolve failures that require compositional inference. These results support Proposition~3 across model families, scales, and modalities. The consistent pattern of high accuracy on directly falsifiable negatives and substantially lower on adversarial negatives is not an artifact of a particular architecture or scale, but a structural feature of current verification behavior.

\subsection{Separating Retrieval from Compositional Reasoning}
\label{sec:exp_graph}

\begin{figure}[H]
  \centering
  \includegraphics[width=\linewidth]{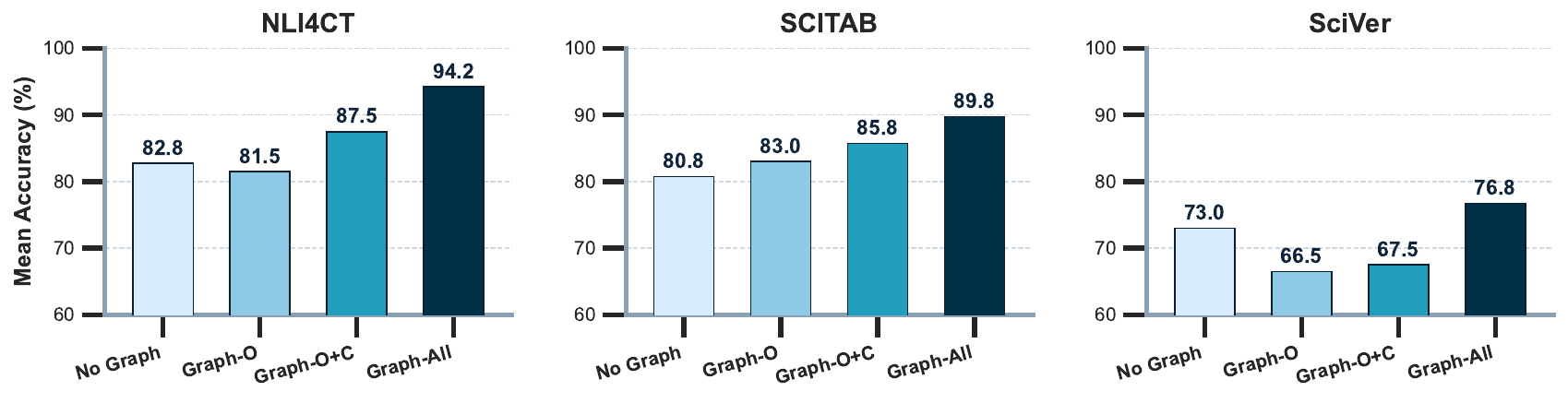}
    \caption{Adversarial negative accuracy under four evidence conditions 
    across eight models. Graph-O provides no consistent improvement over 
    no graph, while Graph-O+C produces intermediate gains and Graph-All 
    approaches ceiling on NLI4CT and SCITAB.}
  \label{fig:reasoning_graph}
\end{figure}

Models may apply salient-constraint checking for two distinct reasons. First, they may fail to retrieve the non-salient observations that reveal the violation, leaving no basis for checking beyond the salient constraint. Second, the relevant observations may be retrievable, but models cannot compose them into the joint inference required to detect the violation~\citep{barnett2024seven}. We design an experiment to distinguish between these possibilities.

We evaluate four evidence conditions. In \textbf{No graph}, the model receives only the claim and the raw document corpus. In \textbf{Graph-O}, the model additionally receives isolated atomic observations as the relevant evidence pieces without any compositional structure. In \textbf{Graph-O+C}, the model further receives context nodes explaining how observations compose into intermediate relationships. In \textbf{Graph-All}, the model receives the full graph including interpretation nodes that encode the conclusion reached by composing the context, and the model must identify where the infeasible claim diverges from this provided conclusion.

Figure~\ref{fig:reasoning_graph} shows an incremental pattern. Graph-O shows no consistent improvement over no-graph: performance plateaus on 
NLI4CT, increases slightly on SCITAB, and drops on SciVer, suggesting retrieval is not the primary bottleneck. Graph-O+C produces intermediate 
gains across all domains, as context nodes provide the intermediate compositional relationships between observations without supplying the 
final conclusion. Graph-All approaches ceiling performance on NLI4CT and SCITAB, because interpretation nodes encode the result of 
full composition directly, reducing the task to identifying where the infeasible claim diverges from the provided conclusion; on SciVer, Graph-All recovers to baseline rather than exceeding it, likely because natural language descriptions interfere with direct visual perception ~\citep{sim2025can}. The bottleneck lies specifically between observation and interpretation: models have the relevant facts and can follow compositional structure when provided, but cannot perform the compositional inference step themselves ~\citep{sim2025can}.

\subsection{Shifting Verification along the OWA--CWA Axis}
\label{sec:routing}\label{sec:exp_prompting}

\begin{figure}[H]
  \centering
  \begin{subfigure}[t]{0.76\linewidth}
    \includegraphics[width=\linewidth]{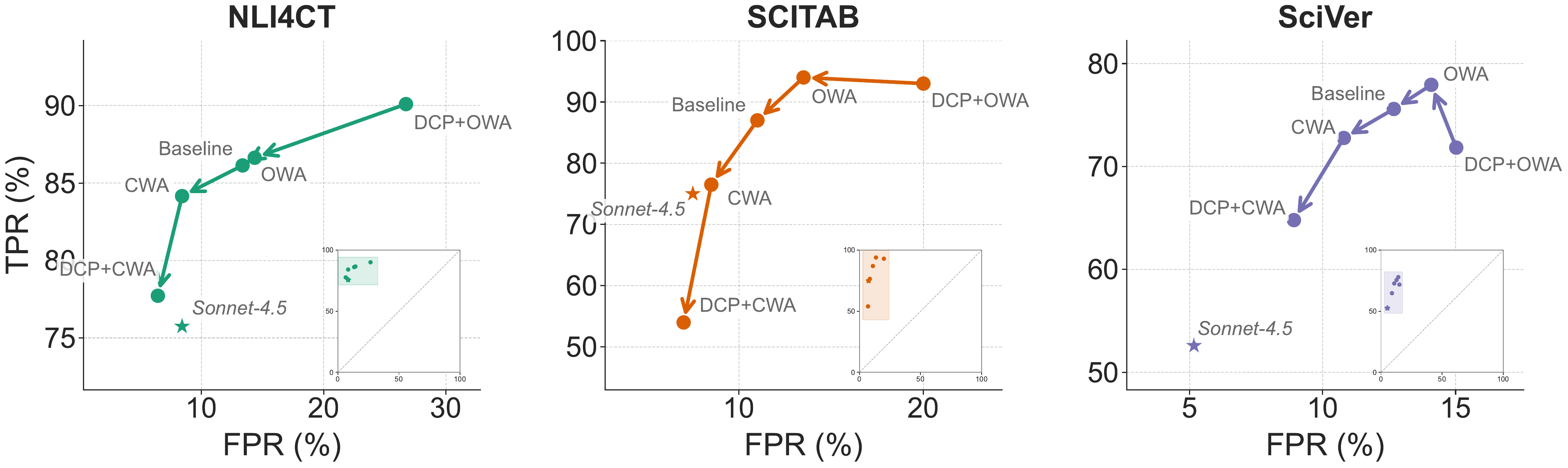}
    \phantomcaption\label{fig:trajectory}
  \end{subfigure}
  \hfill
  \begin{subfigure}[t]{0.22\linewidth}
    \includegraphics[width=\linewidth]{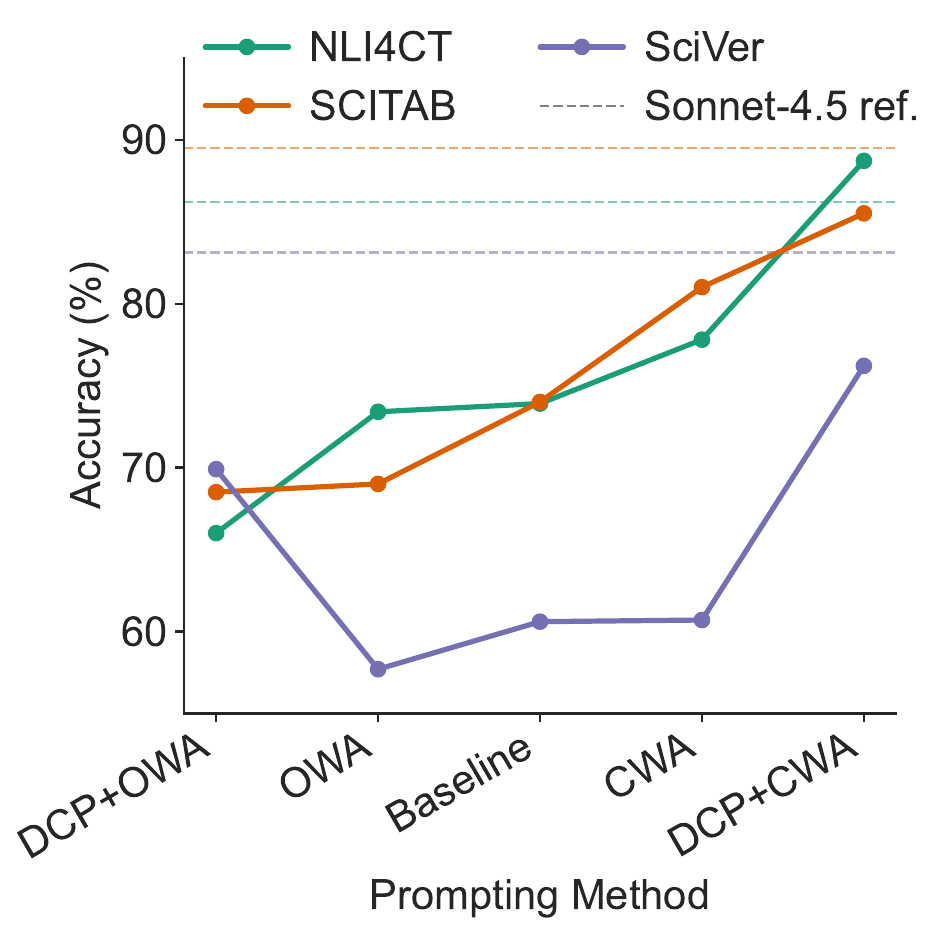}
    \phantomcaption\label{fig:hard_neg_acc}
  \end{subfigure}
  \caption{\textbf{(a)} Verification trajectories in ROC space (TPR vs.\ FPR) under different prompting conditions for Gemini-3.0-Flash. Arrows indicate the ordering of prompting variants (DCP+OWA $\to$ OWA $\to$ Baseline $\to$ CWA $\to$ DCP+CWA); stars denote the Sonnet-4.5 reference. \textbf{(b)} Adversarial negative accuracy under each prompting condition; dashed lines indicate the Sonnet-4.5 reference.}
  \label{fig:capacity}
\end{figure}

A further question is whether this reflects a reasoning ability limit or a failure to apply the right verification strategy. We intervene on Gemini-3.0-Flash with four prompting conditions with different positions on the OWA--CWA axis: \textbf{OWA}, \textbf{CWA}, and decomposition (DCP) variants \textbf{DCP+OWA} and \textbf{DCP+CWA} that explicitly enumerate subclaims before applying the verification rule. Decomposition is included because models prompted with CWA may not actively enumerate constraints without explicit instruction ~\citep{zhou2022least, dhuliawala2024chain}. DCP makes this an explicit step, ensuring the $\forall$-type checking CWA demands is actually carried out .

Figure~\ref{fig:trajectory} shows these conditions produce a smooth trajectory in ROC space ordered from most permissive to most strict (DCP+OWA $\to$ OWA $\to$ Baseline $\to$ CWA $\to$ DCP+CWA), with the baseline lying between OWA and CWA, which empirically align with \S\ref{sec:scc} that salient-constraint checking is a hybrid of the two. Claude-Sonnet-4.5 lies on the same ROC curve as Gemini-3.0-Flash under intervention, and CWA or DCP+CWA prompting moves Gemini-3.0-Flash toward its operating point. This indicates that the performance gap between model families reflects differences in default verification threshold and strategy rather than differences in underlying reasoning ability. Both models share the same reasoning capability frontier, but apply different operating points by default. Claude-Sonnet-4.5's robustness to hard negatives, attributed in \S\ref{sec:eval} to conservative rejection rather than compositional reasoning, is consistent with this picture, as safety alignment shifts its default operating point toward greater conservatism without changing the underlying capability frontier.



The trajectory also reveals the limit of strategy alone. Even under DCP+CWA, the strictest condition, hard negative accuracy does not reach the level achieved on standard negatives, and Figure~\ref{fig:hard_neg_acc} shows that gains in hard negative accuracy come at the cost of feasible claim acceptance. This tradeoff is not resolved by any prompting condition: stricter verification reduces over-acceptance of compositionally infeasible claims but increases rejection of feasible ones, because models cannot yet reliably distinguish the two through 
compositional reasoning at any threshold. Prompting shifts where models operate on the curve, but the curve itself reflects a reasoning bottleneck that verification strategy guidance alone cannot overcome~\citep{arditi2022refusal, su2025ai}.


\section{Reasoning Trace Analysis}
\label{sec:judge}

To complement our outcome-based evaluation with trace-level evidence~\citep{lightman2023let}, we conduct a supplementary, single-domain analysis on NLI4CT. We score each reasoning trace using a rubric-based judge~\citep{kim2023prometheus} targeting salient-constraint checking versus full CWA verification. Each of four dimensions is scored from $-2$ (salient-constraint checking) to $+2$ (full CWA) and aggregated to 0--100. Higher scores indicate reasoning closer to full CWA verification. The complete rubric and human validation results are detailed in Appendix~\ref{app:rubric}.

\begin{table}[t]
\centering
\small
\begin{tabular}{@{}lcccccc@{}}
\toprule
& \multicolumn{3}{c}{\textbf{Standard Negative}}
& \multicolumn{3}{c}{\textbf{Adversarial Neg}} \\
\cmidrule(lr){2-4}\cmidrule(lr){5-7}
& Correct (TN)$\uparrow$ & Incorrect (FN)$\uparrow$ & Avg$\uparrow$
& TN$\uparrow$ & FN$\uparrow$ & Avg$\uparrow$ \\
\midrule
GPT-5-mini       & 86.2 & 61.7 & 84.0 & 92.3 & 64.3 & 88.0 \\
Gemini-3.0-Flash    & 83.0 & 61.9 & 80.1 & 91.4 & 65.1 & 84.9 \\
~~$+$OWA        & 87.7 & 59.2 & 83.6 & 92.0 & 63.6 & 84.5 \\
~~$+$OWA        & 88.0 & 63.2 & 85.9 & 93.9 & 63.4 & 87.1 \\
Claude-Sonnet-4.5      & 89.1 & 71.4 & 87.6 & 92.1 & 70.9 & 89.2 \\
\bottomrule
\end{tabular}
\caption{CWA trace scores (0--100) on NLI4CT. \textbf{Correct (TN)} denotes traces where the model successfully rejected the claim (True Negative); \textbf{Incorrect (FN)} denotes traces where the model falsely accepted it (False Negative). $+$CWA and $+$OWA are prompt intervention.}
\label{tab:judge}
\end{table}

\textbf {Cross-model comparison. }
CWA scores independently recover the robustness ranking from
Table~\ref{tab:routing}: Gemini-3.0-Flash scores lowest (80.1 standard, 84.9
adversarial), GPT-5-mini intermediate (84.0, 88.0), and Claude-Sonnet-4.5
highest (87.6, 89.2). The rubric-based ordering matches the
accuracy-based ordering, confirming that robustness to compositional
falsification tracks how thoroughly models check constraints in their
reasoning traces.

\textbf{Within-model comparison. }
Across all models, average scores on compositionally infeasible claims
are higher than on standard claims (Gemini-3.0-Flash 84.9 vs.\ 80.1, GPT-5-mini
88.0 vs.\ 84.0, Claude-Sonnet-4.5 89.2 vs.\ 87.6), indicating that the
compositional nature of these claims induces more thorough constraint
checking in reasoning traces. The gap between correct and incorrect
traces is larger on compositionally infeasible claims than on standard
ones, because correctly rejecting a compositionally infeasible claim
requires enumerating and composing non-salient constraints, raising the
reasoning bar, while failed traces on both claim types reflect a similar
pattern of early termination at the salient constraint.

\textbf{Intervention. }
CWA prompting raises correct case scores on standard
($83.0 \to 88.0$) and adversarial ($91.4 \to 93.9$) claims, confirming
CWA guidance shifts reasoning toward more thorough constraint
checking. Incorrect adversarial scores remain nearly unchanged
($65.1 \to 63.4$), indicating a reasoning threshold below which
incorrect verdicts occur. OWA prompting does not consistently lower
scores despite increasing acceptance rate, confirming the rubric measures
constraint coverage in reasoning rather than verdict direction.

\section{Discussion}
\label{sec:discussion}

\textbf{Verification mode in self-verification.  }
Scientific claim verification is a form of external verification as it evaluates external claims against evidence. Verification also arises internally, when models check their own reasoning as a common reasoning strategy. If salient-constraint checking reflects a general verification tendency rather than a property specific to external claims, the same failure mode should appear in self-verification. Models may accept their own incorrect responses without checking all constraints that would reveal the error.

We test this by asking GPT-4o-mini to self-verify its own responses on 200 problems each from AIME ~\citep{aime_1983_2024} and GPQA Diamond ~\citep{rein2024gpqa} under baseline and CWA/OWA prompt intervention, with a stronger model judging both verdict and reasoning validity. As shown in Table~\ref{tab:selfverif}, baseline performance 
falls between OWA and CWA on both $F_1$ (over valid rejections and valid acceptances) and Precision at Rejection (Prec@Rej, defined as the proportion of correct rejections accompanied by faithful reasoning), mirroring the ROC trajectory in \S\ref{sec:routing}. OWA underperforms baseline, consistent with increased over-acceptance. CWA achieves the highest Prec@Rej, indicating that stricter constraint checking improves not just rejection rate but the quality of reasoning behind rejections, which could be a potential direction for fixing confirmation bias ~\citep{huang2023large, zhou2025variation, wan2025unveiling}.

\begin{table}[t]
\centering
\small
\begin{tabular}{@{}lcccc@{}}
\toprule
& \multicolumn{2}{c}{\textbf{GPQA}}
& \multicolumn{2}{c}{\textbf{AIME}} \\
\cmidrule(lr){2-3}\cmidrule(lr){4-5}
& $F_1\uparrow$ & Prec@Rej$\uparrow$ & $F_1\uparrow$ & Prec@Rej$\uparrow$ \\
\midrule
Baseline & .218 & .253 & .287 & .256 \\
OWA      & .165 & .206 & .238 & .226 \\
CWA      & \textbf{.262} & \textbf{.273}
         & \textbf{.347} & \textbf{.290} \\
\bottomrule
\end{tabular}
\caption{Self-verification on GPQA Diamond and AIME. A rejection is valid if the solver response is incorrect and the verifier reasoning correctly identifies the exact solver error. $F_1$: macro-$F_1$ 
over valid rejections and valid acceptances. Prec@Rej: precision over valid rejections only. \textbf{Bold}: best.}
\label{tab:selfverif}
\end{table}

\textbf{Origins of salient-constraint checking.  }
We hypothesize that salient-constraint checking emerges from pretraining on scientific data, where claims are typically supported by directly retrievable features and compositional violations rarely appear. In this setting, checking the salient constraint is nearly always sufficient, providing little training signal to develop full CWA verification~\citep{dziri2023faith, kambhampati2024position}. This mismatch between pretraining corpora and the structure of scientific misinformation explains both why salient-constraint checking is the default and why prompting can shift but not overcome the reasoning bottleneck.

\textbf{Future work.  }
This work is primarily diagnostic, as we identify and isolate a failure mode in scientific verification. Although our graph-based pipeline yields structured reasoning trajectories suitable for post-training (SFT and RLVR), we do not view this as a straightforward solution~\citep{yue2025does}. We trained Qwen3-8B on compositional examples using both approaches and shifted the decision boundary toward broader rejection rather than improving compositional reasoning quality. More fundamentally, most training tasks reward $\exists$-type reasoning, where finding any supporting evidence is sufficient for success, leaving the $\forall$-type reasoning that full CWA verification requires systematically undertrained~\citep{dziri2023faith, mccoy2023embers}. Addressing both the compositional inference bottleneck and this verification tendency is likely necessary for genuine progress, and we leave the design of such interventions to future work.
\section{Related Work}

\paragraph{Scientific claim verification.}
Claim verification benchmarks span general knowledge 
\citep{thorne2018fever, aly2021fact}, scientific literature 
\citep{wadden2020fact, wadden2022scifact, lal2025musciclaims, kumar2025sciclaimhunt}, clinical trials 
\citep{jullien2023nli4ct, jullien2024semeval}, tables 
\citep{chen2019tabfact, lu2023scitab}, and charts 
\citep{akhtar2024chartcheck, wang2025sciver}. Previous works improve claim verification ability of LLM through retrieval ~\citep{adjali2024exploring, liu2024retrieval,chen2024complex, sallami2025claimveragents}, decomposition~\citep{sahu2024pelican, lu2025optimizing, vladika2025step} and reasoning improvement~\citep{wadden2025sciriff, freedman2025argumentative}. Models are known to 
exploit surface heuristics rather than genuine reasoning 
\citep{gururangan2018annotation, clark2019don, 
niven2019probing, mccoy2019right}. We identify why existing benchmarks cannot detect a specific failure mode, and construct examples that can detect it.

\paragraph{Compositional adversarial data construction.}
Compositional reasoning benchmarks use graph-based methods to generate multi-hop question answering pairs \citep{yang2018hotpotqa, tu-etal-2019-multi, pan-etal-2020-semantic, trivedi-etal-2022-musique}. Previous works construct adversarial and counterfactual data to evaluate LLMs through human annotation \citep{gardner2020evaluating, kiela-etal-2021-dynabench, liu2023evaluating}, heuristic rule-based perturbations \citep{gan2024reasoning, waghela2024adversarial}, and automated LLM-driven synthetic generation \citep{wu2024reasoning, bhattacharjee2024zero, zhang2025falsereject}. We follow the line of work that applies systematic graph perturbations to generate adversarial data\citep{sharma2023temporal, wu2024reasoning, zhang2024darg, hong2025evaluating, cheng2025understanding}.

\newpage 
\bibliography{references}
\bibliographystyle{colm2026_conference}

\appendix

\section{Benchmark Collection}
\label{app:datasets}
\begin{table}[h]
\centering\small
\renewcommand{\arraystretch}{1.2}
\begin{tabular}{@{}p{1cm}p{1.5cm}p{1.8cm}p{1.2cm}p{1.2cm}p{5cm}@{}}
\toprule
\textbf{Dataset} & \textbf{Original source} & \textbf{Subset used} & \textbf{Generated} & \textbf{Validated} & \textbf{Notes} \\
\midrule
\textbf{NLI4CT}
& Clinical trial claims
& Single-trial examples only
& 242
& 203
& We exclude multi-trial claims and generate hard negatives only from single-trial evidence. \\[4pt]

\textbf{SciTab}
& Scientific table claims
& Randomly selected table subset
& 230
& 200
& Candidate pool was sufficient to retain 200 validated hard negatives after manual filtering. \\[4pt]

\textbf{SciVer}
& Scientific figure claims
& Single-chart direct-reasoning subset
& 213
& 180
& SciVer includes multiple reasoning types and multimodal evidence settings (e.g., chart, text, table, or multiple evidence pieces). We restrict to single-chart examples, which yields 213 source items; after validation, fewer than 200 hard negatives remain. \\
\bottomrule
\end{tabular}
\caption{Dataset subsets and annotation outcomes for adversarial hard-negative construction.}
\label{tab:dataset_stats}
\end{table}
\begin{table}[h]
\centering\small
\resizebox{\linewidth}{!}{%
\renewcommand{\arraystretch}{1.25}
\begin{tabular}{@{}
  >{\raggedright\arraybackslash}p{1.35cm}
  >{\raggedright\arraybackslash}p{3.5cm}
  >{\raggedright\arraybackslash}p{3.7cm}
  >{\raggedright\arraybackslash}p{3.8cm}
  >{\raggedright\arraybackslash}p{5.2cm}@{}}
\toprule
& \textbf{Evidence} & \textbf{Feasible claims} & \textbf{Standard infeasible claims} & \textbf{Our hard-negative perturbations} \\
\midrule

\textbf{NLI4CT}
& Breast cancer clinical trial report excerpts; single section per instance.
& Expert-written claims requiring biomedical, numerical, and commonsense reasoning.
& Contrast-set perturbations, including numerical, lexical, semantic, and syntactic changes.
& Compositional clinical misreadings such as qualifier omission, scope/population shifts, conditional or boundary flips, cross-arm or proxy misattribution, and surfaced assumptions. \\[5pt]

\textbf{SciTab}
& Scientific tables with captions from NLP/ML papers.
& Sentences from the paper grounded in the table; two-annotator verified.
& InstructGPT-generated negatives with wrong calculations, value mismatches, and incorrect approximations.
& Structural table perturbations such as inconsistent baselines, part--whole and averaging errors, derived-metric overextension, null-as-zero aggregation, and occasional causal or categorical overreach. \\[5pt]

\textbf{SciVer}
& Chart screenshots from CS/arXiv papers; direct-reasoning subset only.
& Expert-written claims grounded in chart content.
& Expert-rewritten negatives with hallucinated values, value misattribution, or relationship reversal.
& Structural visual misreadings such as local-to-global overreach, ignored panel or layout boundaries, visual analogy or element conflation, prominence-based over-interpretation, and unsupported comparison classes. \\

\bottomrule
\end{tabular}}
\caption{Benchmark datasets used in this work, including both their original infeasible-claim construction and our compositional hard-negative perturbations.}
\label{tab:benchmarks}
\end{table}

\section{Annotation Details}
\label{app:annotate}
Validation for this work is diagnostic rather than benchmark construction. Our goal is to confirm that generated hard negatives are not accidentally feasible and do not introduce ambiguous cases, not to establish a reusable labeled dataset with publication-grade inter-annotator agreement. In this context, single-annotator validation is standard practice as the annotator's role is to apply a well-defined binary judgment (is this claim clearly infeasible given the document, or is it ambiguous/accidentally true?) rather than to make subjective interpretive assessments.

All generated hard negatives were manually validated by a single author. During annotation, the annotator was given the source evidence, the generated reasoning graph, the perturbation logic (including the corrupted node), and the final infeasible claim. Each example was judged based on both whether the final claim was clearly infeasible with respect to the source and whether the intermediate corruption constituted a valid adversarial transformation.

The final acceptance rates were 203/242 (83.9\%) for NLI4CT, 200/230 (87.0\%) for SciTab, and 180/213 (84.5\%) for SciVer. Most rejected candidates were ambiguous rather than incorrect. They introduced higher-level interpretations, external assumptions, or broader statistical conclusions that were not clearly licensed by the source, but were also not unambiguously contradicted by it. A smaller portion were genuinely invalid generations, typically because an intended overgeneralization remained accidentally compatible with the evidence.

Validation was conducted by a single annotator with CS-domain expertise. SCITAB and SciVer draw exclusively from CS papers, making the annotator's domain knowledge sufficient and inter-annotator variability is expected to be low. For NLI4CT, judgments were grounded in source-document consistency rather than independent clinical expertise. The task was to determine whether the generated claim and corruption were supported by the provided trial text, not to assess medical validity beyond the evidence. This grounds the judgment in objective textual entailment rather than domain-specific interpretation, limiting the role of annotator background knowledge. We acknowledge that multi-annotator review with inter-rater agreement reporting would further strengthen label quality for future benchmark use, and we intend to address this in future work.

\section{Decoding Randomness}
To assess whether findings are stable across decoding randomness, 
we evaluate the Qwen3 family at three temperatures: $T=0$, $T=0.6$, 
and $T=1.0$ on SCITAB. Figure~\ref{fig:temperature} shows accuracy on positive, 
standard negative, and adversarial negative claims across model sizes 
for each temperature.
\begin{figure}[h]
    \centering
    \includegraphics[width=1\linewidth]{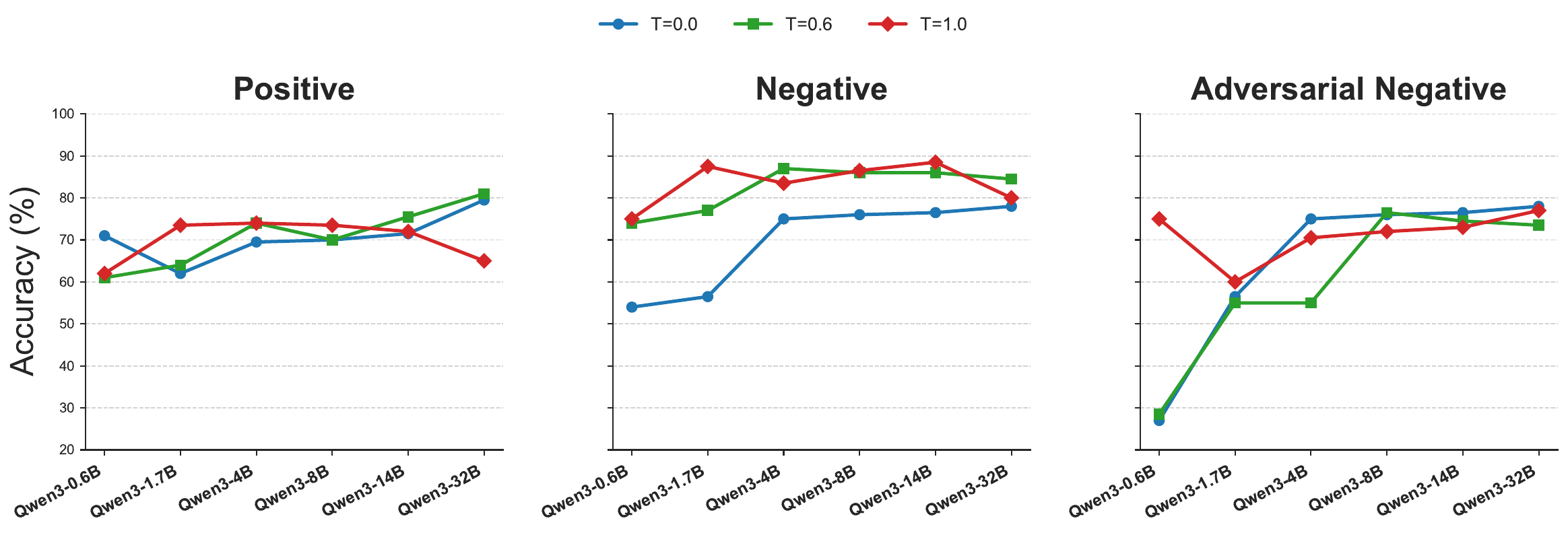}
    \caption{Qwen3 model families performance of SCITAB with decoding at different temperatures}
    \label{fig:temperature}
\end{figure}
Adversarial negative accuracy is lower than standard negative accuracy 
at every model size and every temperature, confirming that the routing 
failure is not an artifact of a particular decoding choice. However, 
the magnitude of the gap varies substantially with temperature. At 
$T=0$, Qwen3-0.6B achieves only 27\% on adversarial negatives despite 86.5\% on standard 
negatives, a 59.5-point difference, because deterministic decoding 
commits fully to the model's prior, which at small scale is 
strong acceptance. As scale increases, the gap narrows 
monotonically, reaching $-8.0\%$ at 32B, producing the clearest 
scaling signal. At $T=1.0$, the gap at small scale nearly vanishes: 
Qwen3-0.6B shows 0\% difference, because high-temperature sampling 
occasionally follows lower-probability reasoning paths that 
accidentally produce correct rejections. This stochastic masking 
makes $T=1.0$ a less reliable diagnostic for the routing failure at 
small scale. $T=0.6$ reaches a balance between these extremes, 
producing stable and monotonically improving adversarial negative 
accuracy across scale without the extreme compression of $T=1.0$ or 
the floor effects of $T=0$ at small sizes. We therefore use $T=0.6$ 
as our primary evaluation setting for the Qwen3 scaling experiment, and report full results across all three temperatures here for completeness.

\section{Ablation Study}
\label{sec:ablation}


\paragraph{Surface form.}
Compositionally infeasible claims tend to be longer than standard infeasible claims due to the compositionality. If surface form rather than compositional structure drives the gap, rephrasing standard infeasible claims to match the style of compositionally infeasible claims should produce a similar drop. We take a subset of 50 standard infeasible claims from each domain, and rephrase each standard infeasible claim into a longer, more formal surface form while preserving the identical factual assertion and violation. As shown in Figure~\ref{fig:surface}, rephrased negatives track original standard negatives closely across all three domains, which proves that surface form does not explain the gap between standard infeasible and compositionally infeasible claims. 
\begin{figure}[h]
    \centering
    \includegraphics[width=\linewidth]{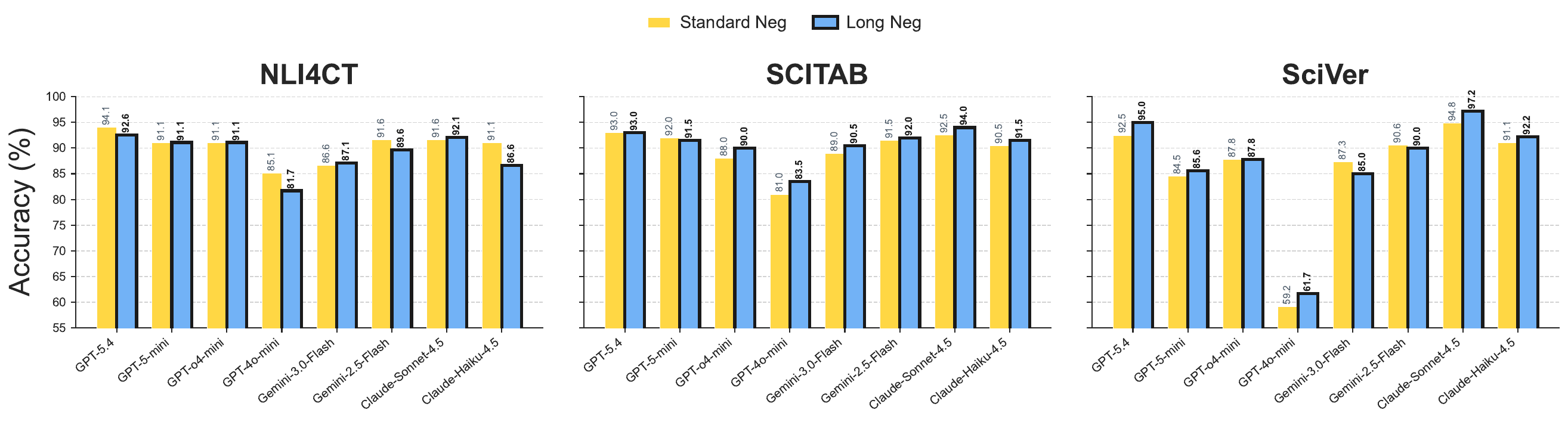}
    \caption{Longer negatives show no improvement or degradation over standard negatives, ruling out surface form as the reason for performance drop on adversarial negatives}
    \label{fig:surface}
\end{figure}

\paragraph{Generator-Verifier Bias}
To assess whether evaluation outcomes are affected by the stylistic artifacts 
introduced by the generation pipeline, we compare performance on two claim versions 
across 50 adversarial negatives per domain. In the \textbf{rephrased} 
condition, GPT-5-mini-generated claims are rephrased by 
Gemini-3.0-Flash while preserving logical content; in the 
\textbf{original} condition, raw GPT-5-mini-generated claims 
are evaluated without rephrasing. If GPT-5-mini were biased 
toward its own writing style, it should score consistently 
higher on original claims than on rephrased ones. 
Table~\ref{tab:gv-bias} shows no consistent pattern supporting 
this. On NLI4CT, both evaluators produce identical scores across 
conditions, showing no effect. On SCITAB, GPT-5-mini scores 
higher on original claims (82\% vs.\ 74\%), consistent with a 
weak stylistic preference. On SciVer, GPT-5-mini scores higher on rephrased 
claims (66\% vs.\ 78\%), which is opposite to the bias direction. 
GPT-5-mini appears more sensitive to surface-level style 
changes than Gemini-3.0-Flash, but the inconsistency in 
direction across domains and evaluators indicates no systematic 
generation-verifier bias. Therefore, the results reported in 
the main paper for the GPT model family are not 
inflated by stylistic familiarity with the generation pipeline.

\begin{table}[h]
\centering
\small
\begin{tabular}{llrr}
\toprule
\textbf{Domain} & \textbf{Evaluator} & 
\textbf{Rephrased (\%)} & \textbf{Original (\%)} \\
\midrule
NLI4CT  & GPT-5-mini        & 86.0 & 86.0 \\
        & Gemini-3.0-Flash  & 76.0 & 76.0 \\
\midrule
SCITAB  & GPT-5-mini        & 74.0 & 82.0 \\
        & Gemini-3.0-Flash  & 72.0 & 70.0 \\
\midrule
SciVer  & GPT-5-mini        & 78.0 & 66.0 \\
        & Gemini-3.0-Flash  & 66.0 & 70.0 \\
\bottomrule
\end{tabular}
\caption{Hard negative accuracy on Gemini-rephrased claims 
(rephrased) vs.\ raw GPT-5-mini-generated claims (original), 
evaluated by two independent model families. Differences are 
small and inconsistent in direction, ruling out systematic 
generation-verifier bias.}
\label{tab:gv-bias}
\end{table}
```
\section{LLM Judge Rubrics and Human Agreement}
\label{app:rubric}

\begin{table}[H]
\centering
\small
\resizebox{\linewidth}{!}{%
\begin{tabular}{@{}>{\raggedright\arraybackslash}p{3.4cm}
                >{\raggedright\arraybackslash}p{6cm}
                >{\raggedright\arraybackslash}p{6cm}@{}}
\toprule
\textbf{Dimension}
    & \textbf{Salient-constraint checking ($-2$)}
    & \textbf{Full CWA ($+2$)} \\
\midrule
Constraint enumeration
    & Checks one prominent constraint
    & Lists all constraints before checking any \\[3pt]
Non-salient coverage
    & Stops at salient constraint
    & Checks non-salient and composed constraints \\[3pt]
Evidence boundary
    & Accepts on local consistency
    & Rejects when scope exceeds evidence \\[3pt]
Verdict warrant
    & Notes gaps but accepts
    & Verdict follows from all gaps \\
\bottomrule
\end{tabular}}
\caption{Rubric dimensions for scoring reasoning traces on the salient-constraint checking vs.\ full CWA axis. Each dimension is scored $\{-2,\ldots,+2\}$ with anchored examples.}
\label{tab:rubric}
\end{table}

To assess the validity of the rubric-based LLM judge, a single annotator with NLP research experience independently scored a stratified sample of 50 reasoning traces from NLI4CT, drawn to cover correct rejection and  incorrect rejections in proportion to their occurrence in the full evaluation set. The annotator scored each trace on the same four dimensions ($-2$ to $+2$ per dimension, aggregated to 0--100) without access to the LLM judge's scores. Agreement between the human annotator and LLM judge, measured as Spearman's $\rho$ on the aggregated 0--100 scale, was $\rho = 0.43$ for  correct traces and $\rho = 0.53$ for incorrect traces, yielding a mean of $\rho = 0.47$ across the two models. This indicates moderate agreement between the human annotator and the automated judge. The overall pooled correlation is lower due to range restriction, as most traces score in a narrow band (75--90), which compresses rank differences and reduces Spearman's sensitivity to genuine agreement. Single-annotator validation is a limitation of this study. Multi-annotator validation with formal inter-annotator agreement reporting remains an important direction for future work.

\section{Self-Verification Experiment Details}
\paragraph{Task and Dataset Selection}
We evaluate on GPQA Diamond~\citep{rein2024gpqa} and 
AIME~\citep{aime_1983_2024}. Both datasets require multi-step 
reasoning where a correct solution must satisfy multiple constraints 
simultaneously, making them natural analogues to the compositional 
verification setting in the main paper. We select GPT-4o-mini as the 
solver because it achieves non-trivial but imperfect accuracy on both 
tasks, producing a mix of correct and incorrect responses suitable for 
studying self-verification behavior. A model that always succeeds 
provides no incorrect responses to verify; a model that always fails 
provides no correct responses to accept. GPT-4o-mini's accuracy in the 
range of 25--35\% on these tasks provides sufficient signal in both 
directions.

\paragraph{Experiments} For each problem, GPT-4o-mini first produces a solution in a single 
forward pass. The same model then self-verifies its own response under 
one of three prompting conditions: baseline, CWA-instructed, or 
OWA-instructed, using the prompts similar to Appendix G. The 
self-verification step receives the problem statement and the model's 
own solution and must output (1) accept or reject (2) reasoning.

Ground truth verdicts and reasoning quality are assessed by 
GPT-5-mini, which achieves over 90\% accuracy on both benchmarks and 
serves as a reliable judge. The judge evaluates two things independently: 
(1) whether the solver's accept/reject answer is correct, and (2) whether the 
verifier's reasoning correctly identifies the source of error when 
rejecting.

\paragraph{Evaluation Metrics} we define the following case taxonomy:

\begin{table}[h]
\centering
\small
\begin{tabular}{p{1.15cm}p{1.2cm}p{1.35cm}p{1.0cm}p{5.0cm}p{1.45cm}}
\toprule
\textbf{Solver} & \textbf{Verifier} & \textbf{Reasoning} & \textbf{Label} & \textbf{Description} & \textbf{Count as} \\
\midrule
Incorrect & Reject & Valid   & TP             & Verifier correctly identifies the actual error & True Positive \\
Correct   & Accept & ---     & TN             & Verifier correctly accepts a correct solution  & True Negative \\
Incorrect & Reject & Invalid & Lucky Catch    & Verifier rejects but for wrong/unrelated reason & False Positive \\
Correct   & Reject & ---     & FP             & Verifier incorrectly rejects a correct solution & False Positive \\
Incorrect & Accept & ---     & FN             & Verifier incorrectly accepts an incorrect solution & False Negative \\
\bottomrule
\end{tabular}
\caption{Case taxonomy for self-verification evaluation. }
\label{tab:selfverif-taxonomy}
\end{table}

We report macro-F1 and Precision at Rejection (Prec@Rej) defined as TP / (TP + lucky\_catch), measuring what fraction of rejections reflect genuine constraint checking rather than incidental rejection.

\subsection*{Results}

\begin{table}[t]
\centering
\small
\begin{tabular}{llrrrrrrrr}
\toprule
\textbf{Dataset} & \textbf{Variant} & \textbf{n} & \textbf{Acc} & \textbf{F1} & \textbf{TP} &
\textbf{TN} & \textbf{FP} & \textbf{FN} & \textbf{Prec@Rej} \\
\midrule
GPQA & baseline & 198 & 0.313 & 0.218 & 19 & 43 & 82 & 54 & 19/75 $\approx$ 0.253 \\
     & CWA      & 198 & \textbf{0.318} & \textbf{0.262} & 24 & 39 & 94 & 41 & \textbf{24/88 $\approx$ 0.273} \\
     & OWA      & 198 & 0.283 & 0.165 & 14 & 42 & 81 & 61 & 14/68 $\approx$ 0.206 \\
\midrule
AIME & baseline & 200 & 0.255 & 0.287 & 30 & 21 & 93 & 56 & 30/117 $\approx$ 0.256 \\
     & CWA      & 200 & \textbf{0.285} & \textbf{0.347} & 38 & 19 & 101 & 42 & \textbf{38/131 $\approx$ 0.290} \\
     & OWA      & 200 & 0.230 & 0.238 & 24 & 22 & 87 & 67 & 24/106 $\approx$ 0.226 \\
\bottomrule
\end{tabular}
\end{table}

The ordering  CWA $>$ baseline $>$ OWA on both F1 and Prec@Rej 
mirrors the ROC trajectory observed in the main paper, 
with the baseline lying between OWA and CWA. This replicates the 
external verification pattern in an internal setting, consistent with 
salient-constraint checking being a general verification tendency rather 
than a property specific to external evidence evaluation.

CWA prompting improves Prec@Rej on both datasets, indicating that 
stricter constraint checking improves not just rejection rate but the 
quality of reasoning behind rejections. The real catch rate under CWA 
(0.273 on GPQA, 0.290 on AIME) exceeds the baseline (0.253, 0.256), 
confirming that CWA guidance produces more genuine error identification 
rather than merely more rejections.

However, the absolute F1 values remain low across all conditions. 
This reflects the same reasoning bottleneck identified that
prompting shifts the operating point but cannot overcome the 
underlying compositional reasoning limitation. The high false positive 
counts under CWA (94 on GPQA, 101 on AIME) indicate that stricter 
verification causes over-rejection of correct solutions, consistent 
with the tradeoff between acceptance and rejection accuracy 
observed in the external verification setting.
\newpage
\section{Prompts for Data Generation}
\label{app:gen prompt}
\begin{figure}[H]
{\centering\large\textbf{Graph Generation Prompt}\par\smallskip}
\begin{tcolorbox}[
  enhanced, breakable,
  colback=gray!4, colframe=gray!45,
  boxrule=0.6pt, arc=3pt,
  left=8pt, right=8pt, top=6pt, bottom=6pt,
  width=\linewidth
]

\noindent Build a reasoning graph for claim verification. You are given:
\begin{itemize}[topsep=1pt,itemsep=0pt,parsep=0pt,leftmargin=*]
  \item \clinic{clinical trial: a full document, a target excerpt, and a feasible claim}
  \item \tabdom{table: a table and a feasible claim}
  \item \chartdom{chart: an image and a feasible claim}
\end{itemize}
The claim is the surface. Your job is to write TRUE context beneath it so later we can create subtle misinformation by mischaracterizing that context. Write three layers:

\tcblower

\textbf{LAYER 1 --- OBSERVATIONS}\quad\textit{(CLAIM-ANCHORING O1..O5: elements related to or directly supporting the claim)}
\begin{itemize}[topsep=1pt,itemsep=0pt,parsep=0pt,leftmargin=*]
  \item \clinic{clinical trial: include numerical values, cohort/arm labels, outcome definitions, denominators, eligibility criteria, measurement definitions, timeframes.}
  \item \tabdom{table: describe structural regions or bindings (rows, columns, headers); focus on distributions, hierarchy, or constraints; avoid isolated cells unless they define outliers.}
  \item \chartdom{chart: describe visible anchors (legend, axes, ticks, panels, marks); include positions, scales, labels, and annotated elements.}
\end{itemize}
\textit{Rules:} all observations must be literal and directly present in the input; no arithmetic, comparison, or inference.

\medskip
\textbf{LAYER 2 --- CONTEXT}\quad\textit{(describe how observations relate structurally)}
\begin{itemize}[topsep=1pt,itemsep=0pt,parsep=0pt,leftmargin=*]
  \item \clinic{clinical trial: connect observations to broader trial structure.}
  \item \tabdom{table: describe structural relationships (ordering, gaps, trends, scope boundaries).}
  \item \chartdom{chart: describe structural relationships (ordering, gaps, trends, scope boundaries).}
\end{itemize}
\textit{Rules:} each context step must cite $\geq$2 observations; must be directly supported by the input; do NOT restate the claim.

\medskip
\textbf{LAYER 3 --- INTERPRETATION}\quad\textit{(explain what the contextual structure implies)}
\begin{itemize}[topsep=1pt,itemsep=0pt,parsep=0pt,leftmargin=*]
  \item \clinic{clinical trial: ground in full document; focus on population, endpoints, conditions, or applicability.}
  \item \tabdom{table: focus on ranking scope, comparison validity, ordering, representativeness.}
  \item \chartdom{chart: focus on trends, generalization, stability, or what values represent.}
\end{itemize}
\textit{Rules:} each interpretation step must cite $\geq$2 context steps; must remain grounded in the input; do NOT restate the claim.

\end{tcolorbox}
\end{figure}
\newpage
\noindent{\large\textbf{Infeasible Claim Generation Prompt}}
\vspace{3pt}

\begin{tcolorbox}[
  enhanced,
  breakable,
  colback=gray!4,
  colframe=gray!45,
  boxrule=0.5pt,
  arc=2pt,
  left=5pt,right=5pt,top=4pt,bottom=4pt,
  boxsep=0pt,
  width=\linewidth,
  before skip=0pt,
  after skip=0pt
]
\footnotesize

\textbf{Input.}
Generate a plausible-but-false (\textbf{infeasible}) claim.
You are given:
\clinic{ clinical trial: full document, target excerpt, source claim;}
\tabdom{ table: table, source claim;}
\chartdom{ chart: image, source claim;}
and a reasoning graph with nodes
\textbf{O} (observations: concrete facts),
\textbf{C} (context: structural relationships), and
\textbf{I} (interpretation: how structure supports conclusions).

\vspace{0.25em}
\textbf{Core idea.}
\emph{Preserve observations:} keep all entities, values, labels, and visible evidence unchanged; the claim must remain grounded in the same observations as the source claim.
\emph{Corrupt reasoning:} introduce a minimal but incorrect structural interpretation; the error must come from how evidence is combined, not what is observed; the claim should remain locally plausible.

\vspace{0.25em}
\textbf{Structural falsifiability.}
The claim must
(1) not be falsifiable by checking a single value, cell, or sentence;
(2) only be falsifiable by combining multiple pieces of evidence; and
(3) require integrating different regions, passages, or structures.

\vspace{0.25em}
\textbf{Domain-specific corruption.}
\clinic{ Clinical trial: errors arise from scope, conditions, populations, or trial structure.}
\tabdom{ Table: errors arise from structural relationships such as grouping, units, or aggregation, and should involve multiple regions (e.g., row group + footnote + column).}
\chartdom{ Chart: errors arise from visual structure such as scope, trend, grouping, or segmentation, and should involve multiple regions or patterns in the figure.}

\vspace{0.25em}
\textbf{What to do.}
(A) Choose exactly one context (\textbf{C}) or interpretation (\textbf{I}) node to corrupt.
(B) Write \texttt{corrupted\_node\_text} as a minimal change to the original structure; keep observations unchanged and make the error subtle and plausible.
(C) Write \texttt{propagation\_notes} in 1--2 sentences explaining how the corrupted structure leads to the false claim.
(D) Write \texttt{infeasible\_claim} in natural, domain-appropriate language; reuse the same observations as the source claim; do not mention counterevidence; do not rely on exact numeric contradictions.

\vspace{0.25em}
\textbf{Self-check (required).}
The claim must be
(1) locally plausible using the same evidence as the source claim;
(2) not falsifiable from a single observation or sentence;
(3) refutable only by combining multiple pieces of evidence;
(4) free of new entities, values, or external knowledge; and
(5) not obviously false or ambiguous to a careful reader.
If any condition fails, revise.

\vspace{0.25em}
\textbf{Disallowed.}
Changing or fabricating observations; explicit contradictions (numeric flip, label swap); causal or normative claims; external knowledge not in the input; trivial or single-step falsification.

\end{tcolorbox}

\section{Prompts for CWA/OWA Intervention}
\begin{figure}[H]
{\centering\large\textbf{Evaluation Prompt}\par\smallskip}
\begin{tcolorbox}[
  enhanced, breakable,
  colback=gray!4, colframe=gray!45,
  boxrule=0.6pt, arc=3pt,
  left=8pt, right=8pt, top=6pt, bottom=6pt,
  width=\linewidth
]
\small

\noindent Determine whether a claim is \textbf{FEASIBLE} or \textbf{INFEASIBLE} given evidence.

\medskip
\noindent You are given:
\begin{itemize}[topsep=1pt,itemsep=0pt,parsep=0pt,leftmargin=*]
  \item \clinic{clinical: full trial + target excerpt}
  \item \tabdom{table: table context}
  \item \chartdom{chart: image (+ optional context)}
  \item claim
\end{itemize}

\tcblower

\noindent\colorbox{blue!10}{\strut\textbf{Type 1: Baseline}}

\smallskip
\noindent Use the evidence to decide whether the claim is supported.

\medskip
\noindent\colorbox{cyan!12}{\strut\textbf{Type 2: OWA}}

\smallskip
\noindent You are under open-world assumption. Use the evidence to decide whether the claim is supported. Do not reject a claim based on missing evidence alone.

\medskip
\noindent\colorbox{orange!14}{\strut\textbf{Type 3: CWA}}

\smallskip
\noindent You are under closed-world assumption.
\begin{itemize}[topsep=1pt,itemsep=0pt,parsep=0pt,leftmargin=*]
  \item \textbf{FEASIBLE} only if all parts of the claim are explicitly supported.
  \item \textbf{INFEASIBLE} if any part is unsupported or contradicted.
\end{itemize}
\noindent Absence of supporting evidence counts as \textbf{INFEASIBLE}.

\medskip
\noindent\colorbox{green!12}{\strut\textbf{Type 4: DCP+OWA}}

\smallskip
\noindent You are under open-world assumption with decomposition.

\noindent\textbf{STEP 1 --- DECOMPOSE}\\
Break the claim into minimal subclaims required for it to hold.

\noindent\textbf{STEP 2 --- CHECK CONTRADICTION}\\
For each subclaim:
\begin{itemize}[topsep=1pt,itemsep=0pt,parsep=0pt,leftmargin=*]
  \item mark \textbf{INFEASIBLE} only if explicitly contradicted
  \item absence of evidence does not invalidate it
\end{itemize}

\noindent\textbf{STEP 3 --- VERDICT}
\begin{itemize}[topsep=1pt,itemsep=0pt,parsep=0pt,leftmargin=*]
  \item \textbf{FEASIBLE}: no subclaim is contradicted
  \item \textbf{INFEASIBLE}: at least one subclaim is contradicted
\end{itemize}

\medskip
\noindent\colorbox{red!10}{\strut\textbf{Type 5: DCP+CWA}}

\smallskip
\noindent You are under closed-world assumption with decomposition.

\noindent\textbf{STEP 1 --- DECOMPOSE}\\
Break the claim into minimal subclaims required for it to hold.

\noindent\textbf{STEP 2 --- CHECK SUPPORT}\\
For each subclaim:
\begin{itemize}[topsep=1pt,itemsep=0pt,parsep=0pt,leftmargin=*]
  \item it must be fully supported by the evidence
  \item unsupported or contradicted subclaims fail
\end{itemize}

\noindent\textbf{STEP 3 --- VERDICT}
\begin{itemize}[topsep=1pt,itemsep=0pt,parsep=0pt,leftmargin=*]
  \item \textbf{FEASIBLE}: all subclaims are supported
  \item \textbf{INFEASIBLE}: any subclaim is not supported
\end{itemize}

\medskip
\noindent\textbf{Output format}
\begin{itemize}[topsep=1pt,itemsep=0pt,parsep=0pt,leftmargin=*]
  \item (\texttt{DCP+OWA} and \texttt{DCP+CWA}) \texttt{Subclaims}: list
  \item \texttt{Reason}: brief explanation grounded in evidence
  \item \texttt{Verdict}: \textbf{FEASIBLE} or \textbf{INFEASIBLE}
\end{itemize}

\end{tcolorbox}
\end{figure}

\section{Data Generation Examples}
\label{app:gen example}
\begin{figure*}[h]
\centering
\small
\setlength{\fboxsep}{6pt}
\fbox{
\begin{minipage}{0.97\linewidth}
\textbf{Clinical trial example (NLI4CT).}

\vspace{4pt}
\textbf{Source.}
Eligibility criteria from a breast cancer clinical trial: histologically or cytologically confirmed primary invasive breast cancer; primary tumor $>2$ cm; HER2 overexpression/amplification confirmed; no prior breast-cancer therapy; \emph{only Japanese women are eligible}. 

\vspace{4pt}
\textbf{Feasible claim.}
Patients must have histologically or cytologically confirmed HER2-positive invasive breast cancer, with a primary tumor larger than 2 cm in diameter, to participate in the trial.

\vspace{4pt}
\textbf{Adversarial negative.}
To participate in this trial for HER2-positive invasive breast cancer, women with a primary tumor larger than 2 cm in diameter are eligible for enrollment regardless of their geographic or ethnic background.

\vspace{6pt}
\textbf{Reasoning graph.}

\begin{tabular}{@{}p{0.08\linewidth}p{0.87\linewidth}@{}}
\textbf{O} &
O1: Primary invasive breast cancer is required. \\
& O2: Primary tumor must be larger than 2 cm. \\
& O3: HER2 overexpression/amplification is required. \\
& O4: No prior breast-cancer therapy. \\
& O5: \emph{Only Japanese women are eligible.} \\[4pt]

\textbf{C} &
C1: The trial requires invasive breast cancer together with HER2-positive status. \\
& C2: The size threshold and no-prior-therapy criterion define a treatment-naive population. \\
& \textbf{C3 (original):} Eligibility is restricted to Japanese women with primary invasive breast cancer. \\
& \textbf{C3 (corrupted):} Eligibility is open to women with primary invasive breast cancer regardless of nationality. \\[4pt]

\textbf{I} &
I1: The trial targets HER2-positive tumors receiving HER2-directed therapy. \\
& I2: The displayed eligibility constraints characterize who may enroll in the study. \\
\end{tabular}

\vspace{6pt}
\textbf{Edit type.}
\textbf{Population/scope shift.} The corruption preserves the salient disease and tumor-size constraints but removes the nationality restriction encoded in O5 and C3, yielding a globally broader eligibility claim that the source does not support.

\end{minipage}}
\caption{Illustrative hard-negative generation for NLI4CT. A single graph node is corrupted while the surface claim remains locally plausible.}
\label{fig:qual_example_ct}
\end{figure*}

\begin{figure*}[h]
\centering
\small
\setlength{\fboxsep}{6pt}
\fbox{
\begin{minipage}{0.97\linewidth}
\textbf{Scientific table example (SciTab).}

\vspace{4pt}
\textbf{Source.} Table 3 from \textit{Sparse and Structured Visual Attention}.

\vspace{4pt}
\centering
\scriptsize
\renewcommand{\arraystretch}{1.05}
\resizebox{\linewidth}{!}{%
\begin{tabular}{@{}lcccccccccc@{}}
\toprule
 & \textbf{Att. to image} & \textbf{Att. to boxes} & \textbf{TD Y/N} & \textbf{TD Num} & \textbf{TD Other} & \textbf{TD Overall} & \textbf{TS Y/N} & \textbf{TS Num} & \textbf{TS Other} & \textbf{TS Overall} \\
\midrule
softmax       & $\checkmark$ &  & 83.08 & 42.65 & 55.74 & 65.52 & 83.55 & 42.68 & 56.01 & 65.97 \\
sparsemax     & $\checkmark$ &  & 83.08 & 43.19 & 55.79 & 65.60 & 83.33 & 42.99 & 56.06 & 65.94 \\
soft-TVmax    & $\checkmark$ &  & 83.13 & 43.53 & 56.01 & 65.76 & 83.63 & 43.24 & 56.10 & 66.11 \\
sparse-TVmax  & $\checkmark$ &  & 83.10 & 43.30 & 56.14 & 65.79 & 83.66 & 43.18 & 56.21 & 66.17 \\
softmax       &  & $\checkmark$ & 85.14 & 49.59 & 58.72 & 68.57 & 85.56 & 49.54 & 59.11 & 69.04 \\
sparsemax     &  & $\checkmark$ & 85.40 & 50.87 & 58.67 & 68.79 & 85.80 & 50.18 & 59.08 & 69.19 \\
softmax       & $\checkmark$ & $\checkmark$ & 85.33 & 50.49 & 58.88 & 68.82 & 85.58 & 50.42 & 59.18 & 69.17 \\
sparse-TVmax  & $\checkmark$ & $\checkmark$ & 85.35 & 50.52 & 59.15 & 68.96 & 85.72 & 50.66 & 59.22 & 69.28 \\
\bottomrule
\end{tabular}}
\vspace{6pt}
\raggedright

\textbf{Feasible claim.}
Moreover, the model using TVMAX in the final attention layer achieves the highest accuracy, showing that features obtained using the TVMAX transformation are a better complement to bounding box features. 
\vspace{6pt}

\textbf{Adversarial negative.}
The automatic evaluation on VQA-2.0 demonstrates that utilizing the TVmax transformation in the final attention layer consistently yields the highest Test-Standard Overall accuracy across all experimental configurations, regardless of whether the model utilizes attention to images, bounding boxes, or both simultaneously.
\vspace{6pt}

\textbf{Reasoning graph.}
\begin{tabular}{@{}p{0.08\linewidth}p{0.87\linewidth}@{}}
\textbf{O} &
O1: The \textit{sparse-TVmax} row with both image and bounding-box attention has the highest Test-Standard Overall score (69.28). \\
& O2: The \textit{soft-TVmax} row also uses TVmax, but only with image attention, and scores 66.11. \\
& O3: The \textit{softmax} row with both image and bounding-box attention scores 69.17. \\
& O4: The caption states that \textit{sparse-TVmax} and \textit{soft-TVmax} differ in the self-attention mechanism paired with TVmax. \\
& O5: The attention columns specify whether each row uses image attention, bounding-box attention, or both. \\[4pt]

\textbf{C} &
C1: Within the both-attention setting, \textit{sparse-TVmax} slightly outperforms \textit{softmax}. \\
& C2: TVmax rows are not directly interchangeable because they differ in both self-attention pairing and input setting. \\
& C3: The highest score is achieved by one specific TVmax-based configuration, not by every TVmax configuration. \\[4pt]

\textbf{I} &
\textbf{I2 (original):} A configuration using TVmax in the final attention layer achieves the highest accuracy among the listed configurations. \\
& \textbf{I2 (corrupted):} Configurations using TVmax in the final attention layer consistently achieve the highest accuracy across all attention settings. \\
\end{tabular}

\vspace{6pt}
\textbf{Edit type.}
\textbf{Configuration-to-general overreach.} A configuration-specific best result is promoted into a universal claim about all TVmax settings.

\end{minipage}}
\caption{Illustrative hard-negative generation for SciTab. The evidence table is shown in full, followed by the reasoning graph and the single-node corruption that turns a configuration-specific conclusion into an unsupported generalization.}
\label{fig:qual_example_tab}
\end{figure*}

\begin{figure*}[h]
\centering
\small
\setlength{\fboxsep}{6pt}
\fbox{
\begin{minipage}{0.97\linewidth}
\textbf{Scientific chart example (SciVer).}

\vspace{4pt}
\textbf{Source.}

\vspace{4pt}
\begin{center}
\includegraphics[width=0.6\linewidth]{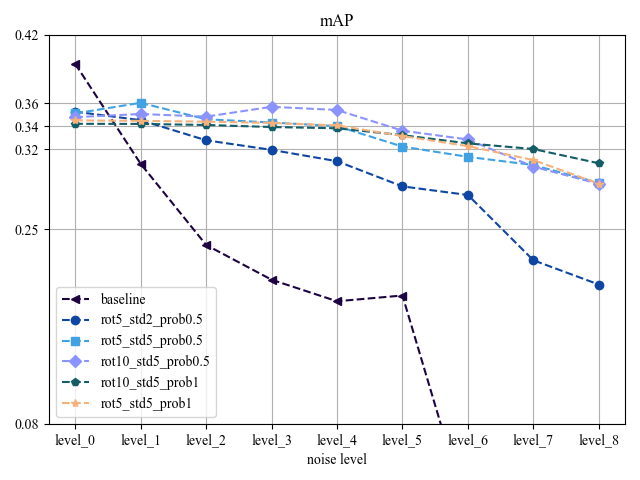}
\end{center}

\vspace{6pt}
\textbf{Feasible claim.}
At noise level\_3, the \texttt{rot10\_std5\_prob0.5} model reaches an mAP of approximately 0.37, which is about 0.15 higher than the baseline's mAP of around 0.22.

\vspace{6pt}
\textbf{Reasoning graph.}

\begin{tabular}{@{}p{0.08\linewidth}p{0.87\linewidth}@{}}
\textbf{O} &
O1: The legend identifies the \texttt{rot10\_std5\_prob0.5} series and the \texttt{baseline} series. \\
& O2: The x-axis shows noise levels \texttt{level\_0} through \texttt{level\_8}, and the y-axis reports mAP. \\
& O3: At \texttt{level\_3}, the \texttt{rot10\_std5\_prob0.5} point is around 0.36--0.37. \\
& O4: At \texttt{level\_3}, the \texttt{baseline} point is around 0.21--0.22. \\
& O5: The vertical separation between the two points at \texttt{level\_3} is visually substantial. \\[4pt]

\textbf{C} &
C1: Using the legend and axis labels, the chart supports a direct local comparison between the two models at \texttt{level\_3}. \\
& C2: At \texttt{level\_3}, \texttt{rot10\_std5\_prob0.5} is clearly above \texttt{baseline} by roughly 0.15 mAP. \\
& C3: This evidence directly supports a pointwise statement about \texttt{level\_3}, but does not by itself establish the same relation across the full noise axis. \\[4pt]

\textbf{I} &
\textbf{I1 (original):} The chart supports a local claim that \texttt{rot10\_std5\_prob0.5} substantially outperforms \texttt{baseline} at \texttt{level\_3}. \\
& \textbf{I1 (corrupted):} The chart supports treating the substantial advantage observed at \texttt{level\_3} as representative of performance across most of the plotted noise spectrum. \\
\end{tabular}

\vspace{6pt}
\textbf{Adversarial negative.}
Across the plotted noise spectrum, the \texttt{rot10\_std5\_prob0.5} model maintains a substantial and consistent mAP advantage over the baseline similar to the gap observed at \texttt{level\_3}, and thus \texttt{rot10\_std5\_prob0.5} outperforms the baseline across most noise levels shown.

\vspace{6pt}
\textbf{Edit type.}
\textbf{Local-to-global overreach.} A valid local comparison at one noise level is promoted into a broader claim about performance across most of the figure, without sufficient visual support.

\end{minipage}}
\caption{Illustrative hard-negative generation for SciVer. A single interpretation-node corruption turns a supported local reading at \texttt{level\_3} into an unsupported global claim about performance across the plotted noise spectrum.}
\label{fig:qual_example_chart}
\end{figure*}

\end{document}